\newcommand{\xmark}{\ding{55}}
\newcommand\mi[1]{}
\newcommand\comment[1]{}
\newcommand\inner[2]{\langle#1, #2\rangle}
\newcommand\rvec[1]{\overset{\rightarrow}{#1}}
\newcommand\lvec[1]{\overset{\leftarrow}{#1}}
\newcommand{\diag}{\mathrm{diag}}
\def\dss{\textsc{DSS}\xspace}
\def\dlr{\textsc{DLR}\xspace}
\def\dlrs{\textsc{DLR}s\xspace}
\def\attention{\textsc{Attention}\xspace}
\def\localattention{\textsc{LocalAttention}\xspace}
\newcommand{\dssexp}{\textsc{DSS}\textsubscript{\textsc{exp}}\xspace}
\newcommand{\R}{\mathbb{R}}
\newcommand{\C}{\mathbb{C}}
\renewcommand{\bar}{\overline}
\newtheorem{proposition}{Proposition}
\newtheorem*{proposition*}{Proposition}
\newtheorem{claim}{Claim}
\newtheorem*{claim*}{Claim}
\title{Simplifying and Understanding State Space Models with Diagonal Linear RNNs}
\author{Ankit Gupta \\
Tel Aviv University \\
{\tt {\small ankitgupta.iitkanpur@gmail.com}}
\And
Harsh Mehta \\
Google Research \\
{\tt {\small harshm@google.com}}
\And
Jonathan Berant \\
Tel Aviv University \\
{\tt {\small joberant@cs.tau.ac.il}}}
\begin{document}
\maketitle

\begin{abstract}
Sequence models based on linear state spaces (SSMs) have recently emerged as a promising choice of architecture for modeling long range dependencies across various modalities.
However, they invariably rely on discretization of a continuous state space, which complicates their presentation and understanding. In this work, we dispose of the discretization step, and propose a model based on vanilla Diagonal Linear RNNs (\dlr). 
\comment{
We empirically show that \dlr is as performant as previously-proposed SSMs, despite being conceptually much simpler.
Moreover, we characterize the expressivity of SSMs (including \dlr) and attention-based models via a suite of $13$ synthetic sequence-to-sequence tasks involving interactions over tens of thousands of tokens, ranging from simple operations, such as shifting an input sequence, to detecting co-dependent visual features over long spatial ranges in flattened images. 
We find that while SSMs report near-perfect performance on tasks that can be modeled via \emph{few} convolutional kernels, they struggle on tasks requiring \emph{many} such kernels and especially when the desired sequence manipulation is
\emph{context-dependent}. For example, \dlr learns to perfectly shift a $0.5M$-long input by an arbitrary number of positions but fails when the shift size depends on context. Despite these limitations, \dlr reaches high performance on two higher-order reasoning tasks \textsc{ListOpsSubTrees} and \textsc{PathfinderSegmentation-256} with input lengths $8K$ and $65K$ respectively, and gives encouraging performance on \textsc{PathfinderSegmentation-512} with input length $262K$ for which attention is not a viable choice.
}
We empirically show that, despite being conceptually much simpler, \dlr is as performant as previously-proposed SSMs on a variety of tasks and benchmarks including Long Range Arena and raw speech classification.
Moreover, we characterize the expressivity of SSMs (including \dlr) and attention-based models via a suite of $13$ synthetic sequence-to-sequence tasks involving interactions over tens of thousands of tokens, ranging from simple operations, such as shifting an input sequence, to detecting co-dependent visual features over long spatial ranges in flattened images. 
We find that while SSMs report near-perfect performance on tasks that can be modeled via \emph{few} convolutional kernels, they struggle on tasks requiring \emph{many} such kernels and especially when the desired sequence manipulation is
\emph{context-dependent}. Despite these limitations, \dlr reaches high performance on two higher-order reasoning tasks \textsc{ListOpsSubTrees} and \textsc{PathfinderSegmentation-256} with input lengths $8K$ and $65K$ respectively, and gives encouraging performance on \textsc{PathfinderSegmentation-512} with input length $262K$ for which attention is not a viable choice.
\end{abstract}

\section{Introduction}\label{sec:intro}

Attention-based models \cite{vaswani2017attention} have been successful across many areas of machine learning \cite{Jumper2021HighlyAP,Ramesh2021ZeroShotTG,radford2022robust}. Specifically, Transformers pre-trained on large amounts of unlabelled text via a denoising objective have become the standard in natural language processing, exhibiting impressive amounts of linguistic and world knowledge \cite{chowdhery2022palm}. Unfortunately, the $\Omega(L^2)$ \mi{why $\Omega$ and not $\Theta$? I get that you are trying to emphasize the lower bound but people are so used to seeing it is $\mathcal{O}(n^2)$ and there is no reason it will be more than quadratic, so this just causes the reader to stop in this unimportant part to wonder why $\Omega$} complexity of self-attention is prohibitive on tasks where the model is required to capture long-range interactions over various parts of a long input.
Recently, \cite{gu2022efficiently} proposed S4, a model that uses linear state spaces for contextualization instead of attention and delivers remarkable performance on long-range reasoning benchmarks such as Long Range Arena (LRA) \cite{tay2021long}. Subsequently, \cite{dss} showed that S4 can be simplified by assuming state matrices to be diagonal, while maintaining similar performance, which then led to multiple Diagonal State Space (DSS) models \cite{gss,s4d,sfive}. \dss models with interleaved attention layers have also delivered state-of-the-art results on language and code modeling \cite{gss} as well as on speech recognition \cite{dssformer}.

While the aforementioned models are indeed simpler than S4, they are all based on discretizations of continuous state spaces that eventually arrive at a diagonal linear RNN (\dlr) whose parameterization differs across the above works. \mi{IMO, this underestiamte how complicated those models remain. The next sentence helps, but I think emphasizing with e.g. "..somewhat simpler.. discretizations of continuous state spaces and remain considerably more complicated comapred to other common models.". Then, you can transition to the reduction of them into RNN}\mi{Also, it is not clear form here if this derivation to RNN is something that all these papers discuss, or something that you show}This discretization step complicates presentation and is not immediately accessible to the average ML practitioner unfamiliar with control theory. This naturally raises the question that, if eventually all these models reduce to some parameterization of a \dlr, why not use \dlrs directly as a starting point? Among the factors that make this challenging is the vanishing gradients problem where the spectral radius of a powered matrix vanishes/explodes with the power \cite{Pascanu2013OnTD} \mi{IMO, this phenomenon is well known, and this description sounds like oyu are trying to use sophisticated words to describe it.}. Past works have attempted to overcome such issues via normalization \cite{ba2016layer}, gating \cite{Wolter2018ComplexGR}, and specialized initializations \cite{Voelker2019LegendreMU}. Similarly, for adequate performance, the \dss models initialize their state space parameters via HiPPO theory, which is a mathematical framework for long-range signal propagation \cite{Voelker2019LegendreMU,gu2020hippo}.  

In this work we propose \dlr, a simplification of \dss that directly uses diagonal linear RNNs as a starting point, making it conceptually straightforward to understand and resulting in a cleaner formulation that obviates some unnecessary terms that arise as a by-product of discretization. Unlike traditional RNNs, \dlr uses complex-valued transition matrices that, when initialized appropriately with eigenvalues close to the unit circle, can enable signal propagation over a million positions. \mi{using such a specific number "one million" sounds like a limit. how about "over sequences containing millions of position"} Moreover, the periodicity of \dlr with respect to its parameters \mi{This periodicity was not discussed before and is not clear from here. This sentence remain a little unclear to the reader.} suggests a natural initialization at which a \dlr of size $N$ can express arbitrary convolutional kernels of length $N$. 

We first analyze the expressivity and performance of \dlr in a lab setting, along with \dss-based and attention-based models,
using a suite of $13$ synthetic sequence-to-sequence tasks requiring interactions over tens of thousands of positions and with varying degrees of difficulty ranging from a simple operation, such as shifting a given sequence, to detecting co-dependent visual features over long spatial ranges in flattened images. Synthetic tasks allow us to programmatically generate data, and to experiment with a large range of input lengths. We construct a wide array of atomic tasks for pinpointing skills such as shifting, copying, reversing, and sorting sequences, and for revealing any immediate shortcomings of models and whether, from an expressivity point of view, one model is subsumed by another.

Moreover, two of our proposed tasks are higher-order classification tasks, \textsc{ListOpsSubTrees} and \textsc{PathfinderSegmentation}, which require multiple skills, such as hierarchical computation and detecting complex long-range spatial relationships in cluttered scenes. Our \textsc{PathfinderSegmentation} task is over sequences of lengths $65K$ and $262K$ respectively, which is $4\times$ and $16\times$ longer compared to the challenging $\textsc{Path-X}$ task from LRA, making our setting considerably harder.

We empirically find that, \dlr performs as well as or better than \dss in all our experiments and is as performant as S4/S4D on LRA, illustrating its viability as a long-range model. Second, while SSM layers (\dss,\dlr) \mi{missing space but more importantly, do you consider \dlr an SSM? if so, it was not completly clear from the above descriptions} perform exceptionally well on manipulation tasks such as shifting (or copying data over) extremely long inputs, they struggle on tasks such as reversing, and especially if the desired sequence manipulation is context-dependent.
Based on our results, we hypothesize that current SSM layers are potent on tasks requiring arbitrarily complex but only a \emph{few} convolutional kernels, but struggle on tasks requiring \emph{many} kernels. Similarly, they fail on tasks where, even though for each sample just a few kernels should suffice, the kernels that need to be applied to an input vary with the input itself. By their very design, SSM layers such as \dlr learn context-independent kernels and hence struggle on such tasks. For example, a \dlr layer learns to perfectly shift a $0.5M$ long input by an arbitrary (but input independent) number of positions, but fails when the amount of the desired shift is context dependent. While we find SSMs to be far more compute efficient than attention, on some tasks even deep SSMs do not match the performance of an attention layer, suggesting that deep SSMs do not subsume attention, and that they both offer complimentary benefits.

To better understand how prohibitive the said limitations are on the higher-order tasks, we train multi-layer SSMs and tractable attention baselines on these tasks. Similar to the atomic tasks, \dlr outperforms SSM baselines. Surprisingly, while our attention-based baseline is upto $8\times$ slower and far less compute-efficient compared to SSMs, it outperforms them on \textsc{ListOps-SubTrees} \mi{The structure of the sentence is confusing. The template is "while something is more expensive, it is still worse" while you had here "while it is more expensive, it is also better"}. This contradicts the low performance of Transformer variants reported on the \textsc{ListOps} task of LRA, highlighting the benefits of our sequence-tagging setup, where supervision is much more dense compared to LRA. \textsc{PathfinderSegmentation}, on the other hand, requires contextualization over significantly longer ranges, and here \dlr not only outperforms all baselines but delivers an impressive performance on images as large as $256\times 256$ (input length $65K$) and reasonable performance in the $512\times 512$ case (input length $262K$).

%\jb{I propose to nuke this paragraph, it is breaking the flow}
%All Transformer variants have so far reported inadequate performance on LRA which comprises only sequence classification tasks. With stronger supervision, our collection of sequence tagging tasks presents a more reliable test bed for investigating the long-range abilities of these models and to conclusively determine if there is any benefit of using them over SSMs such as \dlr. We hope that the model simplifications, insights and tasks presented in our work will inspire novel model designs with better long-range capabilities.

To summarize, 
our work comprises several contributions towards better understanding of SSMs. First, we propose \dlr{}, a simpler and equally effective model compared to current SSMs for modeling long-range interactions. 
Second, we analyze SSMs on a wide range of synthetic tasks and highlight their advantages and limitations compared to other state space and attention-based models. Last, we provide a suite of synthetic tasks that provide a test-bed for analyzing long-range models. Our code and data are available at \url{https://github.com/ag1988/dlr}.% \url{https://anonymized}.%.

\mi{The use of \dlr throught the paper is not quite consistent. First, you refer to it as any diagonal linear RNN like all of the previous discrete SSM works. Then, you use it to refer to your own model but with different choices (like the casting operation). Finally, you use it as the name of your final modeling choice. This back and forth is a bit confusing IMO.}
\section{Method}\label{sec:method}

SSMs such as \dss are based on discretizations of continuous state spaces which makes their presentation complicated and less accessible to the average ML practitioner. We now describe our simplification of \dss that directly uses diagonal linear RNNs as a starting point and does not require any background in control theory. Moreover, removing the discretization step results in a cleaner model, where various scaling factors arising due to discretization are no longer needed.
One difference from the traditional RNNs is that we will work over $\C$ instead of $\R$ which, as we will show in \S\ref{sec:atomic-tasks-results}, is important for capturing long-term dependencies. \mi{I think both "will" in the last sentence should not be there.}

\subsection{Diagonal Linear RNN}\label{sec:dlr} 

\mi{Everything in this subsection \textbf{can} be infered from the text here, and from cross-referencing dimensions of other stuff. However, you are not making it easy for the user (I already knew what to expect and still had to stare at it for 10 minutes and draw stuff for myself). It is such an important subsection, I think you can help the reader a little more. Most importantly, a simply figure can help tremendously (having the $u_0,\dots,u_{L-1}$ in the bottom, the multidimensional $x_i$ and the final $u_i$, with arrows what affect what, and using what parameters. If you are not clear at what I mean, I can send you a sketch}

Parameterized by $\Lambda, w \in \C^{N}$, a diagonal linear RNN (\dlr) defines a 1-dimensional sequence-to-sequence map from an input $(u_0,\ldots,u_{L-1}) = u \in \R^L$ to output $(y_0,\ldots,y_{L-1}) = y \in \C^L$ via the recurrence,
\begin{equation}\label{eqn:discrete}
x_k = \diag(\Lambda) x_{k-1} + \mathbf{1} \cdot u_k\ \ \ ,\ \ \ y_k = \inner{w}{x_k}
\end{equation}
where $x_k \in \C^{N\times 1}$, $\diag(\Lambda) \in \C^{N \times N}$ is a diagonal matrix with diagonal $\Lambda$ and $\inner{a}{b} := \sum_i a_i b_i$. As $\diag(\Lambda)$ is diagonal, the $N$ dimensions of the state $x_k$ do not interact and hence can be computed independently. Assuming $\Lambda = (\lambda_1,\ldots,\lambda_N)$, we obtain the simple recurrence 
\begin{equation}\label{eqn:discrete-i}
x_{i,k} = \lambda_i x_{i,k-1} + u_k\ .
\end{equation}
\mi{Are you sure it should be $x_{i,k}$ and not $x_{k,i}$? In my thinking, you first choose the token (out of the $L$ tokens) and only then the coordinate. Also, like I said in the beginning, something like $i\in[N], k\in[L-1]$ can help the reader a lot}

Assuming $x_{-1} = 0$ for simplicity, Equation \ref{eqn:discrete-i} can be explicitly unrolled as
\begin{equation}\label{eqn:unroll-i}
\begin{gathered}
x_{i,k} = \sum_{j=0}^k \lambda_i^j u_{k-j} \ \ , \ \ y_k = \sum_{j=0}^k \inner{w}{\Lambda^j} u_{k-j}
% y_k = \sum_{j=0}^k \sum_{i=1}^N w_i\lambda_i^j u_{k-j} = \sum_{j=0}^k \inner{w}{\Lambda^j} u_{k-j}
\end{gathered}
\end{equation} 
\mi{I think the $y_i$ part should be in a separate equation. Both because it is important and separate, and since you go to this equation by saying "can be explicitly unrolled as" which is relevant only to $x_i$ from Eq. (1)}
where $\Lambda^j$ is element-wise powered $\Lambda$. For convenience, define the convolutional kernel $K \in \C^L$ as 
\begin{equation}\label{eqn:kernel}
K \ = \ ( \inner{w}{\Lambda^k} )_{0\leq k < L} \quad,\quad y_k \ = \ \sum_{j=0}^k K_j\cdot u_{k-j}\ .
\end{equation}
\mi{A styling choice of course, but IMO it will be clearer to write $K=(\inner{w}{\Lambda^0},\dots,\inner{w}{\Lambda^{L-1}}) \in \C^L$. Although it can be inferred, the reader already deals with a lot of notations here and has to keep track of many different dimensions.}Given an input sequence $u \in \R^L$, one can compute the output $y \in \C^L$ sequentially via the recurrence in Equation \ref{eqn:discrete-i} but sequential computation on long inputs is prohibitively slow.\footnote{Discounted cumulative sum has parallel implementations \cite{Blelloch1990PrefixSA} and is supported by some libraries such as JAX as leveraged in the S5 model \cite{sfive}. After dropping subscript $i$, Equation \ref{eqn:discrete-i} can be unrolled as $x_k = \sum_{j=0}^k \lambda^{k-j} u_j = \lambda^k \sum_{j=0}^k u_j \lambda^{-j} = \lambda^k \sum_{j=0}^k \tilde{u}_j$ where $\sum_{j=0}^k \tilde{u}_j$ is a vanilla cumulative sum with more extensively supported parallel implementations. Unfortunately, this reduction is numerically stable only if $|\lambda| = 1$ and we instead resort to FFT-based convolution (\S\ref{sec:conv-fft}) in this work.} Instead, Equation \ref{eqn:kernel} can be used to compute all elements of $y$ in parallel after computing $K = w_{1\times N}\cdot P_{N \times L}$ with $P_{ik} = \lambda_i^k$.

Given an input sequence $u \in \R^L$ and the kernel $K \in \C^L$, naively using Equation \ref{eqn:kernel} for computing $y$ would require $O(L^2)$ multiplications. This can be done much more efficiently in $O(L\log(L))$ time via Fast Fourier Transform (FFT) (see
\S\ref{sec:conv-fft}).

\paragraph{Casting to $\R$} Equation \ref{eqn:discrete} defines a map $u \in \R^L \mapsto y \in \C^L$ but the produced $y$ needs to be cast to $\R$ for the rest of the layers in the network. We simply cast each $y_k$ to $\R$ as $\mathrm{Re}(y_k)$. Hence, we can assume Equation \ref{eqn:kernel} to be with an explicit cast operation $y_k = \mathrm{Re}(\sum_{j=0}^k K_j\cdot u_{k-j})$. As $u$ is over $\R$, this further implies $y_k = \sum_{j=0}^k \mathrm{Re}(K_j)\cdot u_{k-j}$. Hence, we can also cast $K \in \C^L$ produced in Equation \ref{eqn:kernel} to $\R$ by taking its real part before computing $y$. \mi{As a reader, I'm missing some kind of intuition or motivation why casting with $Re(\cdot)$ makes sense at all. Moreover, you do not even mention the most common way to cast to real numbers (with the magnitude) which is odd for me as a reader.}

In \S\ref{sec:experiments} we also experiment with an alternate choice of casting denoted by \emph{\dlr-prod} in which instead of casting the kernel elements as $\mathrm{Re}(K_k)$ we use $\mathrm{Re}(K_k)\cdot \mathrm{Im}(K_k)$. %\jb{did you change the subscript from $j$ to $k$? why?}
In \S\ref{sec:dlr-prod} we prove that this kernel corresponds to the kernel of a \dlr \mi{Here is an example how \dlr is your specific model with specific modeling choices, as opposed to how it was first denoted inthe intro.}of size at most $4N^2$ and generalize this to define the \textit{Kronecker product} of \dlrs where the elementwise product of \dlr kernels is shown to correspond to a \dlr itself. We will see that this alternative gives significantly better results on tasks such as shifting the elements of a sequence, which can be described in terms of sparse kernels.

\paragraph{Bidirectional DLR} In Equation \ref{eqn:kernel}, $y_i$ does not depend on $y_{> i}$ and hence the model is left-to-right only. To benefit from bidirectionality, we form a bidirectional version by simply summing the outputs of two independent \dlr's, one for each direction:
\begin{equation}\label{eqn:discrete-bi}
\begin{gathered}
\rvec{x}_k = \diag(\Lambda_1)\rvec{x}_{k-1} + \mathbf{1} \cdot u_k \\ \lvec{x}_k = \diag(\Lambda_2)\lvec{x}_{k-1} + \mathbf{1} \cdot u_{L-1-k} \\
y_k = \inner{w_1}{\rvec{x}_k}\ +\ \inner{w_2}{\lvec{x}_{L-1-(k+1)}}\ .
\end{gathered}
\end{equation}
Similar to Equation \ref{eqn:kernel}, we have
\begin{equation}\label{eqn:kernel-bi}
\begin{gathered}
\rvec{K} \ = \ ( \inner{w_1}{\Lambda_1^k} )_{0\leq k < L}\ \ ,\ \ \lvec{K} \ = \ ( \inner{w_2}{\Lambda_2^k} )_{0\leq k < L} \\
y_k = \sum_{j=0}^k \rvec{K}_{k-j}\cdot u_{j} + \sum_{j=0}^{L-1-(k+1)} \lvec{K}_{L-1-(k+1)-j}\cdot u_{L-1-j} \\
= \ \sum_{j=0}^k \rvec{K}_{k-j}\cdot u_{j} + \sum_{j=k+1}^{L-1} \lvec{K}_{j-(k+1)}\cdot u_j
\end{gathered}
\end{equation}
% \jb{maybe say that in the last step you essentially reverse the indexing to be from left-to-right? otherwise it is hard ot know what the index of the right-to-left kernel means, right?} 
which is a standard Toeplitz matrix-vector product and can be computed via FFT by reducing it to a circulant matrix-vector product of size $2L$ as described in \S\ref{sec:conv-fft}.

\paragraph{Comparison with \dss} As stated earlier, SSMs like S4 discretize continuous state spaces to arrive at a \dlr. For instance, the \dssexp model \cite{dss} discretizes the following state space assuming zero-order hold over intervals of size $\Delta \in \R_{>0}$
$$\frac{dx}{dt}(t) = \diag(\Lambda)x(t) + \mathbf{1}u(t)\ \ ,\ \ y(t) = w\cdot x(t),$$
which results in the \dlr
\begin{align*}
x_k = \diag(\exp(\Delta\Lambda)) x_{k-1} + \mathbf{1} \cdot u_k \\ 
y_k = \inner{w((\exp(\Delta\Lambda)-\mathbf{1}) / \Lambda}{x_k} .
\end{align*}
Equation \ref{eqn:discrete} looks similar, but removes the parameter $\Delta$ and simplifies the computation of $y_k$ by omitting an additional scaling factor. 
%JB: stressed a bit more the simplification and I don't think the `reasonable' part is necessary.
%except for some additional scaling factors and the parameter $\Delta$ and it is reasonable to expect \dlr to be as performant as \dssexp.

\subsection{\dlrs are as expressive as general linear RNNs}\label{sec:dlr-diag}

In this work, we use \emph{diagonal} linear RNNs for contextualization and it is natural to ask if using a \emph{general} linear RNN instead leads to a more expressive model. In particular, for parameters $A \in \C^{N \times N}$, $B \in \C^{N \times 1}$, $C \in \C^{1 \times N}$, a linear RNN computes the following 1-D sequence-to-sequence map from an input $(u_0,\ldots,u_{L-1}) = u \in \R^L$ to output $(y_0,\ldots,y_{L-1}) = y \in \C^L$
\begin{equation}\label{eqn:discrete-rnn}
x_k = A x_{k-1} + B \cdot u_k\ \ \ ,\ \ \ y_k = C \cdot x_k .
\end{equation}
In \S\ref{app:diagonal} we use a simple diagonalization argument to show if $A$ is diagonalizable over $\C$ then there exists an equivalent \dlr of the same state size computing the same map. In particular, we show the following proposition, which asserts that \dlrs are as expressive as general linear RNNs:

\begin{proposition}\label{prop:diagonal} In Equation \ref{eqn:discrete-rnn}, let $A \in \C^{N \times N}$ be diagonalizable over $\C$ as $V\mathrm{diag}(\Lambda)V^{-1}$. Then, $\exists w \in \C^N$ such that \dlr parameterized by $\Lambda, w$ (Equation \ref{eqn:discrete}) computes the same map as Equation \ref{eqn:discrete-rnn}.
\end{proposition}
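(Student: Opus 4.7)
The plan is to reduce Equation \ref{eqn:discrete-rnn} to the form of Equation \ref{eqn:discrete} by a change of basis that diagonalizes $A$, and then absorb the resulting input vector into the output weights. Concretely, given $A = V\diag(\Lambda)V^{-1}$, I would introduce the new state $\tilde{x}_k := V^{-1} x_k$ and left-multiply the state equation by $V^{-1}$, producing a recurrence of the shape
\begin{equation*}
\tilde{x}_k \;=\; \diag(\Lambda)\,\tilde{x}_{k-1} \;+\; \tilde{B}\cdot u_k, \qquad y_k \;=\; \tilde{C}\,\tilde{x}_k,
\end{equation*}
where $\tilde{B} := V^{-1}B \in \C^{N}$ and $\tilde{C} := CV \in \C^{1\times N}$. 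This is already diagonal; the only discrepancy from a \dlr is that $\tilde{B}$ need not equal $\mathbf{1}$.

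Next I would unroll this diagonal recurrence coordinatewise, exactly as in Equations \ref{eqn:discrete-i}--\ref{eqn:unroll-i}, to get $\tilde{x}_{i,k} = \sum_{j=0}^{k} \lambda_i^{j}\,\tilde{B}_i\,u_{k-j}$ and therefore
\begin{equation*}
y_k \;=\; \sum_{i=1}^{N} \tilde{C}_i\,\tilde{x}_{i,k} \;=\; \sum_{j=0}^{k}\left(\sum_{i=1}^{N} (\tilde{C}_i\,\tilde{B}_i)\,\lambda_i^{j}\right) u_{k-j}.
\end{equation*}
The crucial observation is that because the state coordinates of a diagonal system are decoupled, each $\tilde{B}_i$ enters $y_k$ only through the scalar product $\tilde{C}_i\tilde{B}_i$, so its effect can be folded into the output weights. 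Accordingly I would define $w \in \C^{N}$ coordinatewise by $w_i := (CV)_i \cdot (V^{-1}B)_i$ and verify that the parenthesized sum equals $\inner{w}{\Lambda^{j}}$, which is precisely the DLR kernel from Equation \ref{eqn:kernel}. Matching kernels on the same input $u$ gives matching outputs, establishing the proposition.

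I do not expect a genuine obstacle: the argument is a similarity transform plus a rank-one collapse. The only point worth articulating is that this collapse is specific to the \emph{diagonal} case: in general, an arbitrary input vector $\tilde{B}$ cannot be absorbed into an output row vector $\tilde{C}$, but componentwise -- and only componentwise -- the product $\tilde{C}_i\tilde{B}_i$ suffices. This is exactly why restricting the \dlr to an input vector of all ones, as in Equation \ref{eqn:discrete}, sacrifices no expressivity relative to the general diagonalizable linear RNN with independent $B$ and $C$.
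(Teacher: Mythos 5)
Your proposal is correct and matches the paper's own argument: the paper likewise unrolls the recurrence, writes $A^j = V\mathrm{diag}(\Lambda^j)V^{-1}$, and sets $w_i = (CV)_i\,(V^{-1}B)_i$ so that the kernels coincide. Performing the change of basis on the state first versus substituting the diagonalization after unrolling is only a cosmetic reordering of the same steps.
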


\subsection{\dlr Layer}\label{sec:dlr-layer}

In principle, one can directly parameterize our 1-D \dlr map via $\Lambda, w \in \C^N$ and use Equation \ref{eqn:kernel} to compute the output. Unfortunately, $||\Lambda||_\infty$ can become larger than $1$ during training making the training unstable on long inputs as $K_{L-1}$ depends on terms as large as $\lambda_i^{L-1}$ which even for modest values of $L$ can be very large. Hence, we parameterize $\Lambda$ in log space and, following \cite{goel2022sashimi}, restrict the real parts to be negative. Our 1-D \dlr map has parameters $(\log\Lambda)_\mathrm{re}, (\log\Lambda)_\mathrm{im} \in \R^{N}$, $w \in \C^{1 \times N}$. First, $\Lambda$ is computed as $\exp(-(\log\Lambda)_\mathrm{re}^2 + i\cdot (\log\Lambda)_\mathrm{im})$ where $i = \sqrt{-1}$ and the kernel is then computed via Equation \ref{eqn:kernel}. \mi{If I undersatnd correctly, you started talking about numerical stability which motivated you to work in the log space, but on the way just added a second trick (restricting to the negative numbers, I assume to prevent exploding gradient) as an after thought. It is easy to miss and a little bit intreleaved. I think a clarification acn help the reader. }

Similar to S4, each \dlr layer receives a sequence $u \in \R^{H\times L}$ of $H$-dimensional vectors and produces an output $y \in \R^{H\times L}$. The parameters of the layer are $(\log\Lambda)_\mathrm{re}, (\log\Lambda)_\mathrm{im} \in \R^N$ and $W \in \C^{H \times N}$. For each coordinate $h=1,\ldots,H$, a kernel $K_h \in \R^L$ is computed as described above. The output $y_h \in \R^L$ for coordinate $h$ is computed from $u_h \in \R^L$ and $K_h$ using Equation \ref{eqn:kernel}. This is followed by a residual connection from $u$ to $y$. Moreover, to allow each output element to be a non-linear function of the input, a GELU non-linearity \cite{hendrycks2016gelu} is applied and, finally, a position-wise linear projection $W_{\text{out}} \in \R^{H\times H}$ is then applied to enable information exchange among the $H$ coordinates. The \dlr layer can be implemented in just a few lines of code (Figure \ref{app:torch}).

% \paragraph{Complexity} For batch size $B$, sequence length $L$ and hidden size $H$, the \dlr layer requires $O(NHL)$ time and space to compute the kernels, $O(BHL\log(L))$ time for the convolution and $O(BH^2L)$ time for the output projection. The kernel part of \dlr layer has $2N + 2HN$ real-valued parameters.

\paragraph{Initialization of \dlr layer}%\label{sec:dlr-init}

While the convolution view of \dlr (Equation \ref{eqn:kernel}) scales efficiently on modern hardware to extremely long inputs, it is still a RNN \mi{\textbf{an} RNN?}and can suffer from vanishing gradients over a significant portion of the domain. Concretely, from Equation \ref{eqn:unroll-i} we have $y_k = \sum_{j=0}^k K_j u_{k-j}$ and thus ${\partial y_k \over \partial u_{k-j}} = K_j$. If $|K_{> c}| \ll 1$ then each $y_k$ would depend only on the local context $u_{k-c},\ldots,u_k$. Furthermore, if $\max_i |\lambda_i| \ll 1$ then updating the values of $K_j$ for large values of $j$ will be slow since $K_j = \sum_{i=1}^N w_i\lambda_i^j$, ${\partial K_j \over \partial \lambda_i} = w_i \lambda_i^{j-1}j$ and ${\partial K_j \over \partial w_i} = \lambda_i^{j}$, which can hinder the ability to model long-range dependencies.

We parameterize the $\Lambda$ of \dlr as $\exp(-(\log\Lambda)_\mathrm{re}^2 + i\cdot (\log\Lambda)_\mathrm{im})$ which is periodic in $(\log\Lambda)_\mathrm{im}$ with period $2\pi$ and hence initialize $(\log\Lambda)_\mathrm{im} \in \R^N$ as $(2\pi n /N)_{0\leq n\leq N-1}$ by uniformly spacing it over its period. At this initialization with $(\log\Lambda)_\mathrm{re} = \mathbf{0}$ and $L=N$ we would have $K = (\inner{w}{\Lambda^k})_{0\leq k < L} = \mathrm{DFT}\cdot w$. As $\mathrm{DFT}$ is invertible and well-conditioned, there is a $w$ for any arbitrary length-$N$ kernel and in particular one can express arbitrary long-range kernels.

Unless stated otherwise, each element of $(\log\Lambda)_\mathrm{re}$ is initialized as $(e^r / 2)^{1/2}$ where $r \sim \mathcal{U}(\log(.0005), \log(.5))$ to induce locality bias as $|\lambda_i| = \exp(-(\log\Lambda)_\mathrm{re,i}^2) \leq 1$ is a decreasing function of $(\log\Lambda)_\mathrm{re,i}$ and a smaller $|\lambda_i|$ leads to more local kernels. The real and imaginary parts of each element of $W$ are initialized from $\mathcal{N}(0,\sigma^2\!\!=\!\!N^{-2})$. In all our experiments, the learning rate (and schedule) of all \dlr parameters is same as that of other model parameters but weight decay is not applied to \dlr parameters. \mi{Why not?}
%We typically use $N=1024$ but for some experiments with extremely long inputs we use $N=4096$ as detailed in \S\ref{sec:experiments}. Exceptions to these settings are noted in \S\ref{sec:experimental-setup}. 

\section{Experiments}\label{sec:experiments}

Having proposed the conceptually simple \dlr model, we now investigate the differences between \dlr and different families of models including other state space models and attention. To this end, we start with synthetic tasks designed to pinpoint atomic capabilities (\S\ref{sec:atomic-tasks}) and then turn to higher order long-range tasks involving hierarchical computation and reasoning over high-resolution synthetic images.
%I think this paragraph should be re-written to include an opening sentence that ties the previous section (DLR) to this section (Experiments). I think it should be something about how after proposing the conceptually-simple DLR, we would like to more clearly understand the differences between the different families of models - DLR, state-space models, and attention. To this end, we start with synthetic tasks that will help us pinpoint  atomic capabilities (section bla) and then turn to higher order long-range tasks that involve bla bla bla as you write. I think without this the two parts of the paper seem a bit unrelated. 

\subsection{Atomic Tasks}\label{sec:atomic-tasks}

We consider atomic tasks such as shifting, reversing, and sorting an input sequence to reveal any immediate shortcomings of a model and to see whether, from an expressivity point of view, one model is subsumed by another. Of particular interest to us will \mi{will $\rightarrow$ is?} the property that, unlike attention, convolutional models (\emph{convnets}) apply the same manipulation to every input.
% \jb{I think perhaps a figure with example for each atomic task will make reading this super easy} \jb{I wonder if to already mention that one focus is the ability of s4 style models to change their attention patterns based on context}

\paragraph{\textsc{Shift}} An input $x \in \R^L$ is sampled with elements from $\mathcal{N}(0,1)$ and normalized by $||x||_\infty$. For a parameter $C=8$, the desired output $y \in \R^{L\times C}$ is $y_{ij} = x_{i-{j\cdot L \over C}}$ for $j=0,\ldots,C-1$ and $x_{<0}=0$, i.e. the output $y$ comprises uniformly-spaced right shifts of $x$. This task is similar to the ``capacity task'' proposed in \cite{Voelker2019LegendreMU}. Note that given a sequence, convolving it with a one-hot kernel of the same length with $1$ at position $r$, right shifts the sequence by $r$ positions. \mi{For me, the fact that you started off by saying which tasks should be solvable with a conv-kernel and which not was very helpful, but you stopped half way through. If you do know that fact for all of them, saying it explicitly (and even grouping them by that propert) is very nice, and helps make sense of the final results.}

\paragraph{\textsc{CumSum}} An input $x \in \R^L$ is sampled with elements from $\mathcal{N}(0,1)$ and normalized by $||x||_\infty$. The output is $y \in \R^{L}$ with $y_{i} = (i+1)^{-1/2} \sum_{j\leq i}x_j$, where we scale by a factor of $(i+1)^{-1/2}$ as $n^{1/2}$ is standard deviation of the sum of $n$ standard gaussians. Note that convolving $x$ with the all-1's kernel of length $L$ produces $\sum_{j\leq i}x_j$.

\paragraph{\textsc{CumMax}} Same as \textsc{CumSum}, except the output $y \in \R^{L}$ is $y_{i} = \max_{j\leq i}x_j$. Unlike $\textsc{CumSum}$, this task cannot be described via a convolution with a fixed kernel.

\paragraph{\textsc{Reverse}} Same as \textsc{CumSum}, except that the output $y \in \R^{L}$ is $y_{i} = x_{L-1-i}$. In our experiments, to enable the use of unidirectional models on this task, we pad the input $x$ by $L$ zeros on the right to have an input of length $2L$ as such a model must observe the entire sequence $x$ before decoding. At the output, we consider the model prediction as the $L$ rightmost outputs.
% \jb{the important point is that you are using a  unidirectional model, right? so maybe we should use that word? and say that with such a model you must observe the entire input before decoding? We can discuss it makes sense but it's good to alleviate some of the mental burden from the reader.}

\paragraph{\textsc{Sort}} Same as \textsc{Reverse}, except that in the output $y \in \R^{L}$, $y_{i}$ is the $i$'th closest element to $x_0$ i.e. we need to sort $x_i$'s according to their distance $|x_i-x_0|$ from the first element of the sequence.
% instead of simply $x_i$ as we observed that the outputs had a very distance from  

\paragraph{\textsc{Select}} For a parameter $M=32$, $x \in \R^{L+M}$ is sampled with elements from $\mathcal{N}(0,1)$ and normalized by $||x||_\infty$. $M$ distinct positions $i_1 < \ldots < i_M$ are sampled from $0\ldots L+M-1$ and the output $y \in \R^{M}$ has $y_{j} = x_{i_j}$. In our experiments, we first pad $x$ with $M$ zeros on the right to get $x' \in \R^{L+2M}$ and form an input in $\R^{(L+2M) \times 2}$ by concatenating $0$/$1$ at each position indicating whether the position was among the $M$ selected positions. At the output, we consider the model prediction as the $M$ rightmost outputs.

\paragraph{\textsc{SelectFixed}} We also considered an easier variant of \textsc{Select} in which the randomly selected positions $i_1 < \ldots < i_M$ do \emph{not} vary across samples, that is, the model has to copy the inputs from a fixed set of positions. This task is designed to investigate whether convnets like \dlr having fixed kernels perform well on tasks where the desired sequence manipulation is context-independent. \mi{Here is the first time you actually refer to \dlr as a CNN rather than RNN. It was a bit surprising an afterhanded}
% \jb{maybe explain why? that since we have fixed kernels the hypothesis is that it will perform well here but unclear when it is different per example? Kind of a small forward reference that makes the reader understand the purpose of this.}

\paragraph{\textsc{MIPS}}(maximum inner product search)\ \ For a parameter $D=4$, queries, keys and values $q, k, v \in \R^{L \times D}$ are sampled with elements from $\mathcal{N}(0,1)$ and each vector is normalized by its euclidean norm. The output $y \in \R^{L \times D}$ is given by $y_i = v_{i'}$ where $i' = \mathrm{argmax}_{j \leq i}\ \langle q_i, k_j \rangle$. An input in $\R^{L \times 3D}$ is formed by concatenating the corresponding query, key and value at each position. For the $i$'th query we do not consider the keys on it's right to enable the use of unidirectional models.
%\jb{how are the queries., keys and values combined to produce the input?}.

\paragraph{\textsc{Context-Shift}} $x \in \R^{L-2}$ is sampled with elements from $\mathcal{N}(0,1)$ and normalized by $||x||_\infty$. A random shift value $s$ is sampled from $0,\ldots L-2$ and the input $x' \in \R^{L}$ is formed as $x' = (\cos(2\pi s / L), \sin(2\pi s / L), x)$.  The output $y \in \R^{L}$ is $y_{i} = x'_{i-s}$, where $x_{<0} = 0$. Unlike the \textsc{Shift} task, here the shift value is context-dependent as the model must infer $s$ to produce the output $y$.
% \jb{need to say a bit more about the point that we are interested in context-dependent operations} \jb{let's discuss this}.

\paragraph{\textsc{Solve}} For a given input length $L$, let $N$ be largest integer such that $L-N^2-N \geq N$. A random orthonormal matrix $A \in \R^{N\times N}$ and a random unit vector $X \in \R^{N}$ are sampled and $B = AX$ is computed. An input $x \in \R^{L}$ is formed as $x = (a_1, b_1, \ldots, a_N, b_N, \mathbf{0}_{L-N^2-N}) \in \R^L$ where $a_i \in \R^N$ is the $i$'th row of $A$ and $b_i$ is the $i$'th element of $B$. The desired output $y \in \R^{N}$ is $X$. At the output, we consider the model prediction as the $N$ rightmost outputs. This task is inspired by recent works investigating whether Transformers can learn linear functions in-context \cite{Garg2022WhatCT}. 

\paragraph{\textsc{Solve-Fixed}} Same as \textsc{Solve}, except that the matrix $A$ is fixed and does not vary across the samples.

In all tasks, we include some rudimentary global positional information in the input $x \in \R^{T\times D}$ by modifying it as $x' \in \R^{T\times (D+2)}$ where $x_i' = (x_i, \cos(2\pi i / T), \sin(2\pi i / T))$.

\subsection{Higher-order Tasks}\label{sec:lra-tasks}

\mi{Not sure where is the right place to write it, but it was a little surprising for me not see LRA results for DLR (To prove that even without the dense supervision it is still equivalent to DSS/S4}
\mi{Also, as an interesting ablation (though may be outside the scope of this paper) is to see if global tokens that are so common in efficient transformers help on the atomic tasks significantly. IMO, the atomic tasks shed a lot of light not only on the specific capabilities, but also on the importance of dense supervision in seq2seq tasks, and this results would be very interesting.}

While the regression tasks in \S\ref{sec:atomic-tasks} are helpful at pinpointing atomic skills and weaknesses in model designs, we also devised classification tasks requiring multiple skills to approximate more realistic scenarios where we can pretrain on large amounts of data with strong supervision. In particular, we devised sequence-tagging versions of the two most challenging tasks in LRA, viz., \textsc{ListOps} and \textsc{Path-X}  that require hierarchical computation and detecting complex long-range spatial relationships in cluttered scenes. We convert these tasks to a sequence tagging format as the original classification setup provides a very sparse signal for training. Conversely, modern self-supervised models are based on rich supervision provided through a denoising objective. We argue that sequence tagging provides this dense supervision and is thus better-aligned with current training practices \cite{ElNouby2021AreLD,He2022MaskedAA,Krishna2022DownstreamDM}.

\begin{figure}[t]
    \centering
    % \vspace*{-10pt}
    \hspace*{-0.15in}
    \includegraphics[scale=0.47]{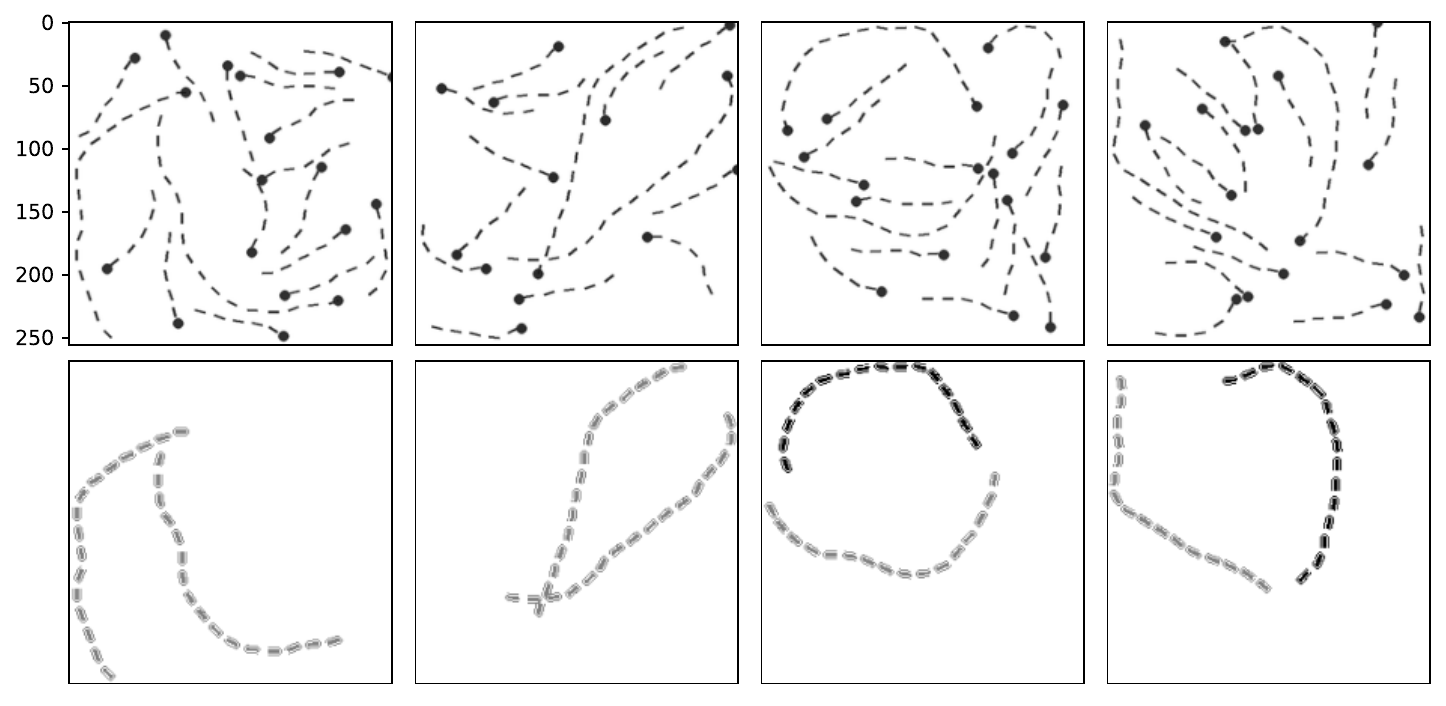}
    \caption{\textsc{Pathfinder-Segmentation}. For each input image in $\R^{M \times M}$ (top row) its corresponding label mask in $\{0,1,2\}^{M \times M}$ is shown at bottom. A pixel is labelled 0 (white color) if it does not lie on a main path, 2 (black) if it lies on a main path such that both ends of this path have black circles, and 1 (gray) otherwise. See \S\ref{sec:lra-tasks} for details.
    }
\label{fig:pathfinder256}
\end{figure}

\paragraph{\textsc{ListOps-SubTrees}} We constructed a sequence-tagging version of the \textsc{ListOps} 10-way classification task where given a bracketed mathematical expression one has to compute its value in the set $\{0,\ldots,9\}$. Unlike \textsc{ListOps}, instead of predicting just the value of the full expression, we tag each closing bracket with the value of the sub-expression computed at the corresponding sub-tree. For example, for the input expression ``\texttt{[MAX 2 6 [MED [SM 3 1 6 ] 8 3 ] 4 5 ]}'' the corresponding output is ``\texttt{-  -  -  -    - -  - - 0 - - 3 - - 6}'' where the ``\texttt{-}'' labels are ignored and $\texttt{[SM}$ denotes the sum modulo $10$ operation. The input expressions were ensured to have lengths between $7000$ and $8192$ resulting in a median length $7.5\times$ larger than that of the \textsc{ListOps} data provided in LRA. The longer inputs make the task challenging from the perspective of long-range reasoning and supervision for every node of the expression tree provides a stronger training signal to the model making the task ideal for investigating the expressivity of long-range models. 

\paragraph{\textsc{Pathfinder-Segmentation}} Finally, we constructed a sequence-tagging version of the original \textsc{Pathfinder} task of \cite{linsley2018learning,Kim2020Disentangling} where instead of predicting a single label for an image we predict a label for each pixel. Similar to \textsc{Pathfinder}, a synthetic image is sampled in $\R^{M \times M}$ containing two long paths each with a preset number of dashes as shown in Figure \ref{fig:pathfinder256}. Along with these two main paths, several short ``distractors'' are also included. The task requires predicting a label for each pixel where the label of a pixel is 0 if it does not lie on a main path, 2 if it lies on a main path such that both ends of this path have black circles, and 1 otherwise. 

Similar to LRA, we flatten the input image (and the output label mask) into a sequence of length $L=M^2$. Despite being continuous in the 2D image, a segment can potentially split across tens of thousands of positions in the flattened sequence. We generated data with $M=256$ and $M=512$ resulting in sequences of lengths $65K$ and $262K$ respectively, which is $4\times$ and $16\times$ longer compared to the $\textsc{Path-X}$ task from LRA making our setting considerably more challenging. Secondly, our task provides a stronger training signal to the model, again making it ideal for a study on expressivity of long range models.

For both of the above tasks, we generated $100K$ samples and used a 96/2/2 train/validation/test split.

\subsection{Results on Atomic Tasks}\label{sec:atomic-tasks-results}

\paragraph{Experimental setup} To compare attention-based contextualization to SSMs, we trained single \dlr, \dssexp and \attention layers on the atomic tasks described in \S\ref{sec:atomic-tasks}. We chose \dssexp as a representative for models based on discretization of a continuous diagonal state space, since it was shown to be as performant as other S4-like models \cite{dss,s4d}. To avoid any noise from non-contexualization layers such as feed-forward layers we formed an \attention block by simply replacing the \dlr contextualization in the \dlr layer with a $4$-head attention layer and rotary positional embeddings \cite{Su2021RoFormerET}. All models trained on atomic tasks are unidirectional and are trained for an equal number of steps with MSE loss, using a fresh batch sampled for every training and evaluation step. All experiments in \S\ref{sec:atomic-tasks-results} were performed on a single NVIDIA 3090 (24GiB) GPU.

Since atomic tasks are regression problems, we evaluate performance with R$^2$. At each evaluation step, we compute the R$^2$ score as $1 - \mathrm{MSE}(y_\mathrm{pred}, y_\mathrm{true}) / \mathrm{MSE}(\mathrm{mean}(y_\mathrm{true}), y_\mathrm{true})$ where $y_\mathrm{pred}$ are model predictions, $y_\mathrm{true}$ are true labels and $\mathrm{mean}(y_\mathrm{true}) \in \R$ is computed over the entire batch. We report the R$^2$ score averaged over a fixed number of evaluation steps. See \S\ref{sec:experimental-setup} for training details.

\begin{table*}[t]%\setlength{\tabcolsep}{4pt} %% default is 6pt
    \small
    \centering
    % \resizebox{\textwidth}{!}{%
    \caption{(Top) Average validation R$^2$ across batches on tasks described in \S\ref{sec:atomic-tasks}, (Bottom) Relative time per step for models using the same input length. Actual input length for \textsc{Reverse} and \textsc{Sort} is $2L$ and thus runtime is reported separately for them. Models with input lengths $2^{12}$, $2^9$ are trained for $40K$, $11K$ steps respectively.}\label{tab:atomic-tasks} \mi{Two things that I think are worth addressing: 1. Why was the attention model chosen like that, and why is fair that it has an order of magnitude less parameters. 1: what happens in the slowdown in the deeper models. Is it because in face the FF layer take the majority of compute and thus any speedup in the contextualization layer becomes much less significant in deeper models?}
    \begin{tabular}{lcccc|cc}
    \toprule
                 &  \dlr & \dssexp & \attention & \dlr &  \dlr & \attention \\ 
        number of layers &  $1$ & $1$ & $1$ & $6$ &  $6$ & $2$ \\
        params &  $1.1M$ & $1.1M$ & $83K$ & $6.4M$ &  $6.4M$ & $166K$ \\
        $L$       &  $2^{12}$ & $2^{12}$ & $2^{12}$ & $2^{12}$ &  $2^{9}$ & $2^{9}$ \\ 
    \midrule
      \textsc{Shift} & \textbf{1} & \textbf{.99} & .72 & \textbf{1} & 1 & 1 \\
      \textsc{CumSum} & \textbf{1} & \textbf{1} & \textbf{1} & \textbf{1} & 1 & 1 \\ 
      \textsc{CumMax} & .52 & .51 & \textbf{1} & \textbf{1} & 1  & 1 \\ 
      \textsc{Select-Fixed} & \textbf{.97} & 0 & .72 & \textbf{1} & 1 & .94\\
      \textsc{Solve-Fixed} & \textbf{1} & .01 & 0 & \textbf{1} & 1 & .95\\ 
      \textsc{Reverse} & .01 & 0 & .03 & \textbf{.99} & .95 & .28\\ 
      \textsc{Solve} & 0 & 0 & 0 & 0 & .95 & 0\\ 
      \textsc{Select} & 0 & 0 & .17 & 0 & .86 & .97\\  
      \textsc{Sort} & 0 & 0 & 0 & .49 & .50 & .51 \\ 
      \textsc{ContextShift} & 0 & 0 & 0 & .02 & .11 & .04\\ 
      \textsc{MIPS} & 0 & 0 & \textbf{.90} & 0 & .01 & .97\\ 
      \midrule
       \begin{tabular}{@{}l@{}} \textsc{Shift},\textsc{CumSum}, \\ \textsc{ContextShift},\textsc{Select}\end{tabular}   & $1\times$ & $1\times$ & $6.3\times$ & $5\times$ &  $1\times$ & $1.1\times$ \\
      \textsc{Reverse},\textsc{Sort} & $1\times$ & $1\times$ & $13.3\times$ & $5.4\times$ &  $1\times$ & $1.8\times$ \\  
      \bottomrule  \vspace*{.2pt}
     \end{tabular}
    %  }
\end{table*}

\vspace*{-0.1in}
\paragraph{High-level overview} Tables \ref{tab:atomic-tasks} and \ref{tab:shift} show the results for all models on different tasks and input lengths, based on which we can draw the following high-level conclusions. 
Firstly, in all experiments \dlr performs as good as or better than \dssexp{}, illsutrating its viability as a long-range model. Secondly, we see in Table \ref{tab:shift} that SSMs can scale to lengths that are infeasible for attention. For example, both \dssexp and \dlr get high results on \textsc{Shift} for sequences of length as large as $16K$ (and even beyond, see discussion below). To allow comparison between various models, we experiment in a setup where \attention is feasible, namely for sequences of length $4096$ and $512$. We find that all convnet layers struggle on tasks where the desired sequence manipulation is context-dependent and, finally, that deep SSMs do not subsume attention. We now discuss these points in detail.

\noindent\textbf{Convnet layers struggle with context-dependent operations} As summarized in Table \ref{tab:atomic-tasks}, SSM layers (\dlr, \dssexp) are highly effective on tasks like \textsc{Shift} and \textsc{CumSum} that can be described via a small number of convolutional kernels. On the other hand, they fail on tasks such a \textsc{Reverse} that potentially require a \textit{large number of kernels} (e.g., a value at position $i$ needs to be right-shifted by $2L-i$ positions). Similarly, they fail on \textsc{Select} where, even though for each sample just $M=32$ shift kernels should suffice, the value of these shifts is \textit{context-dependent}, i.e., the kernels that need to be applied to an input vary with the input itself. By their very design, SSMs such as \dlr learn context-independent kernels and hence struggle on such tasks. This hypothesis is further supported by the perfect performance of \dlr on \textsc{Select-Fixed} in which the value of the shifts does not change with the inputs and hence this task can be described via a small number of convolutional kernels. Similarly, SSMs fail on \textsc{ContextShift} which is a context-dependent version of \textsc{Shift}.

\begin{table}[t]%\setlength{\tabcolsep}{4pt} %% default is 6pt
    \small
    \centering
    \caption{Average validation R$^2$ across batches of a single layer on \textsc{Shift} task for input length $L$. Batch size = 4, hidden size = 32. All models use  $N=4096$ except where noted. \dlr versions and \dssexp use constant learning rate of 1e-5 and 1e-3 respectively. \dlr-$\R$ denotes \dlr with real-valued $\Lambda, w$.}\label{tab:shift}
    \begin{tabular}{lllllll}
        \toprule
              &     &            & \multicolumn{2}{c}{$L$} & & \\
        model  & params       &   $2^{8}$ & $2^{14}$ & $2^{16}$ & $2^{18}$ & $2^{20}$        \\ 
        \midrule
        \dssexp  & $272K$      & &  .71   & .34  &  .22 &  .22     \\
        \dlr     & $272K$   & &  .83   &  .39 &  .25 &  .22     \\
        \textsc{SGConv}  &  $920K$  &   & .88 & .45   & .27   & .22    \\
        \dlr-prod & $272K$  & &  \textbf{1}  &  \textbf{1}  & \textbf{.98} &  \textbf{.78}  \\
        \midrule
        \dlr, $N=512$ & $35K$        & &  .30   &  .22 & .22 &  .22     \\
        \dlr-$\R$ & $137K$   & .23 &  &    &   &    \\      
        \bottomrule  \vspace*{.2pt}
     \end{tabular}
\end{table}

\noindent\textbf{Compute-matched setting}\ \ Table \ref{tab:atomic-tasks} reveals that there are tasks such as \textsc{CumMax} and \textsc{MIPS} on which a single \attention layer is more expressive than a single SSM layer. On the other hand, already for lengths as short as $4096$, it is more than $6\times$ slower. \mi{Is the slowdown per layer? should we expect a 2 layer DLR model to be $6^2$ times faster than a two layer attention model?} This raises the question that, instead of comparing a single \dlr layer to an \attention layer, what if we stack multiple \dlr layers to the point that it takes similar time as an \attention layer? After repeating the experiments with a $6$-layer \dlr model we indeed find almost perfect results on tasks such as \textsc{Reverse} which require a large number of (context independent) kernels.

Interestingly, on context dependent tasks such as \textsc{Sort}, \textsc{Select}, \textsc{MIPS}, \textsc{ContextShift} and \textsc{Solve} even the deeper \dlr model fails, demonstrating that \attention is not subsumed by a deeper \dlr stack and that further research is required to alleviate the shortcomings of SSMs on context dependent tasks. This is inline with \cite{gss} who reported a significant reduction in the perplexity of their SSM after sparingly interleaving in chunked attention layers, therefore suggesting that SSMs and attention offer complementary benefits.

We also experimented with the \attention layer with additional feed-forward layers to match the parameter count of \dlr but did not see improved results (\S\ref{sec:additional}).

\paragraph{Scalability on \textsc{Shift} task} Having established the encouraging performance of \dlr layers on tasks such as \textsc{Shift} that require only a few kernels, we explore if this performance holds while scaling the input length to larger values and whether \dlr can learn extremely long kernels with high resolution. As shown in Table \ref{tab:shift}, even a single \dlr layer provides excellent performance on this task on $16K$-long sequences, but the performance starts to degrade beyond that. Interestingly, we found that the \dlr-prod version of \dlr (\S\ref{sec:dlr}) performs significantly better with nearly perfect performance on lengths as large as $1M$. This suggests that there is room for more complex kernel designs having better performance on certain tasks. We note that using a large enough $N$ is essential for expressing long range kernels with high-resolution and that an inadequately small\footnote{We note that $C=8$ one-hot kernels, i.e., model dimension $H=8$, should suffice on \textsc{Shift} and it is the state size $N$ that needs to be large. In practice, a large $N$ can lead to high memory usage with models like S5 \cite{sfive} that work directly at state level (Equation \ref{eqn:discrete-i}) with a $\Omega(B\cdot N \cdot (H + L \log L))$ complexity.} $N$ leads to a reduced performance (Table \ref{tab:shift}).

\noindent\textbf{Restricting $\dlr$ parameters to reals}\quad We also experimented with a version of \dlr denoted as \dlr-$\R$ in which we restrict $\Lambda, w$ in Equation \ref{eqn:discrete} to be real-valued and form $\Lambda = \exp(-(\log\Lambda)_\mathrm{re}^2)$ in \S\ref{sec:dlr-layer}. While this version managed to give an R$^2$ score of $1$ on \textsc{CumSum}, it scored 0 on \textsc{Select-Fixed} and, as shown in Table \ref{tab:shift}, failed on \textsc{Shift} with lengths as short as $256$. This suggests that methods such as EMA \cite{Ma2022MegaMA}, based on diagonal state spaces with purely-real parameterizations are incapable of modeling even short arbitrary kernels and we formally prove this in \S\ref{sec:real-vander}. This further suggests that long-range interactions in a multi-layer MEGA stack are captured by chunked attention, with EMA potentially providing some inter-chunk communication due to the use of non-overlapping chunks. %\ag{sketch equivlaence}

\subsection{Results on Higher-order Tasks}\label{sec:lra-tasks-results}

\begin{table*}[t]%\setlength{\tabcolsep}{4pt} %% default is 6pt
    \small
    \centering
    \caption{(left) Token-wise accuracy on test set of \textsc{ListOps-SubTrees}. (right) macro F1-score / macro accuracy on test set of \textsc{Pathfinder-Segmentation} where we compute the F1-score/accuracy individually for each label class and average the $3$ values. \xmark\ denotes the experiment was infeasible due to compute constraints. Relative time per step is in parenthesis. See \S\ref{sec:lra-tasks-results} for more details.}\label{tab:lra-tasks-results}
    \begin{tabular}{lcccc}
    \toprule   
            &   \textsc{ListOps-SubTrees} &    \multicolumn{2}{c}{\hspace{40pt}  \textsc{Pathfinder-Segmentation}} & \\
            &            &  $128\times 128$  &  $256\times 256$ & $512\times512$   \\ 
    sequence length & $8K$ & $16K$ & $65K$ & $262K$\\
    number of layers & 6 & 5 & 6 & 12\\
    \midrule
    \textsc{LocalAttention}  & \textbf{94.0}\ \ ($8\times$) &  95.2 / 98.4\ \ ($19\times$)  &  \xmark    &   \xmark   \\ 
    \dssexp  &  83.8\ \ ($1\times$) & 94.4 / 97.2\ \ ($1\times$)  &  89.4 / 96.2   & 60.4 / 79.3    \\ 
    \dlr & 85.7\ \ ($1\times$) & \textbf{96.8} / 97.7\ \ ($1\times$) & \textbf{94.0} / 96.3 & \textbf{76.3} / 92.8  \\
    \bottomrule  \vspace*{.05pt}
     \end{tabular}
\end{table*}

The experiments and analysis in \S\ref{sec:atomic-tasks-results} give us new insights into the workings of SSMs and their limitations. We would now like to understand how prohibitive these limitations are on the higher-order tasks defined in \S\ref{sec:lra-tasks}. To that end, we trained multi-layer models on these tasks and summarize the results in Table \ref{tab:lra-tasks-results}. As the input lengths of these tasks are infeasible for \textsc{Attention}, we used a tractable version of the \attention block defined in \S\ref{sec:atomic-tasks-results}, denoted as \textsc{LocalAttention}, where we chunk the input to the attention layer into non-overlapping chunks of length $1024$ (or $4096$ for image tasks) and allow each chunk to attend to itself and the adjacent chunk(s). 

\noindent\textbf{\textsc{ListOps-SubTrees}}\ \ Similar to the experiments in \S\ref{sec:atomic-tasks-results}, the performance of \dlr is again slightly better than \dssexp. Interestingly, although \localattention is $8\times$ slower than SSMs, in terms of performance it outperforms SSMs, which contradicts the low historical performance of Transformer variants on the \textsc{ListOps} version from LRA, highlighting the benefits of a dense training signal and reaffirming the orthogonal benefits of attention and SSMs. 

Secondly, while \dlr delivers a per-token accuracy of $85.7$, error analysis (Figure \ref{fig:listops-subtree-err}) reveals that this can mainly be attributed to shallow sub-expressions and that the model performs poorly on expressions with a parse tree of height more than $2$, where the ``length'' of a path is the number of operators on it. For height beyond $3$, the errors at the children compound, leading to errors higher up the tree. This suggests that sequence models including SSMs struggle at hierarchical computation.

Breaking down the performance with respect to the operator at a node in an expression tree reveals that most errors can be attributed to operators such as \texttt{[SM} (sum modulo $10$) that are sensitive to \textit{all} the inputs, i.e. perturbing the value of even a single argument will change the output of the operator. Hence, the model must compute all the sub-expressions correctly. On the other hand operators such as \texttt{[MAX} are more robust to small perturbations in their arguments making it harder for the errors at the children to propagate to the parent.

\noindent\textbf{\textsc{Pathfinder-Segmentation}}\ \ Unlike previous experiments, we trained bidirectional models on this task. Due to the high imbalance between the pixel-label classes 0/1/2, we report macro F1-score (and macro accuracy) where we compute the F1-score (accuracy) individually for each label class and average the $3$ values. \dlr not only outperforms the baselines but delivers an impressive performance for images as large as $256\times 256$ (input length $65K$), corroborated by the model predictions on random samples from the validation set (Figure \ref{fig:dlr-pfsg-preds}, top). The $512\times 512$ case with input length $262K$ is more challenging as each layer of the model needs to contextualize over tens of thousands of positions. While \dlr outperforms the baselines, it does leave a significant room for improvement, and indeed as seen from the model predictions (Figure \ref{fig:dlr-pfsg-preds} bottom) it makes quite a few errors.

Compared to \textsc{ListOps-SubTrees}, \textsc{PathfinderSegmentation} requires contextualization over significantly longer ranges and in this case \textsc{LocalAttention} with chunk size $4096$ is outperformed by \dlr both in terms of performance and speed. Due to the long training times we could not report its performance on the $256$ and $512$ cases. In the future, we plan on benchmarking other Transformer variants to conclusively determine if there is any benefit of using them over SSMs on long-range tasks. The large gap between \dlr and \dssexp in the $512 \times 512$ case can potentially be reduced with better hyperparameter tuning of \dssexp and we leave this for future work. Training details are provided in \S\ref{sec:experimental-setup}.

\subsection{Results on Sequence Classification and Language Modeling}\label{sec:additional-results}

\begin{table}[t]\setlength{\tabcolsep}{4.2pt} %% default is 6pt
    \centering
    \caption{Sequence classification accuracy on Long Range Arena tasks and 10-way Speech Commands task. Performance of Transformer, S4D and S4 is as reported in \cite{s4d}. Performance of S4 on Speech Commands is as reported in \cite{gu2022efficiently}. \xmark\ denotes chance performance or computationally infeasible.}\label{tab:pathx}
    % \begin{tabular}{lcccc}
    %     \toprule
    %     & Transformer & S4D-Inv &  S4-LegS &  \dlr   \\ 
    %     \midrule
    %     \textsc{Text} & 64.3    &  \textbf{87.3}   &  86.8  & 86.7     \\
    %     \textsc{Retrieval} & 57.5    &  \textbf{91.1}   &  90.9  & 89.?     \\
    %     \textsc{Path-X} & \xmark    &  92.8   &  \textbf{96.4}  & 94.5     \\
    %     \bottomrule % \vspace*{.1pt}
    %  \end{tabular}
    \begin{tabular}{lcccccc}
        \toprule
        & \textsc{ListOps} & \textsc{Text} & \textsc{Retrieval} & \textsc{Pathfinder}  &  \textsc{Path-X}  &  \textsc{SpeechCommands}   \\ 
        \midrule
        Transformer & 36.4 & 64.3    &  57.5   & 71.4 & \xmark  & \xmark     \\
        S4D-Inv & 60.2 & \textbf{87.3}    &  \textbf{91.1}  & 93.8 &  92.8  &      \\
        S4-LegS & 59.6 & 86.8   &  90.9  & \textbf{94.2} & \textbf{96.4}  & \textbf{98.3}     \\
        \dlr & \textbf{60.5} & 86.7  &  89.1   & 92.5 & 94.5  & 97.1     \\
        \bottomrule % \vspace*{.1pt}
     \end{tabular}
     
\end{table}

\paragraph{Long Range Arena} We also benchmarked \dlr on a subset of tasks from Long Range Arena as well as on Speech Commands raw speech classification \cite{Warden2018SpeechCA}. Unlike the previous tasks considered in this work, these are sequence classification tasks with sparse supervision (e.g. in case of \textsc{Path-X} a single binary label per image). As shown in Table \ref{tab:pathx}, we again find the performance of \dlr to be on a par with that of the best performing SSMs S4 and S4D, in addition to having a much simpler and cleaner formulation.

\paragraph{Language Modeling} To demonstrate the effectiveness of \dlr at modeling complex real-word tasks, we performed causal language modeling on the PG-19 text corpus consisting of English books \cite{raecompressive2019}. As shown in Table \ref{tab:lm}, we find the performance and throughput of \dlr to be comparable to that of Transformers with hardware-optimized attention implementation \cite{dao2022flashattention} while enjoying a $O(L)$ complexity at decoding time compared to $O(L^2)$ complexity in case of Transformers. We also find \dlr to be more robust to the placement of the layer-norm layers compared to Transformers which are known to be highly sensitive to their placement \cite{Xiong2020OnLN}.

\begin{table}[t]%\setlength{\tabcolsep}{5pt} %% default is 6pt
    \centering
    \caption{Train/test cross-entropy loss on PG-19 text corpus. Text is tokenized using T5 tokenizer and chunked into sequences of size 4096. Transformer uses the hardware-optimized attention implementation in PyTorch 2.0 based on \cite{dao2022flashattention}. Details in \S \ref{sec:experimental-setup}.}\label{tab:lm}
    \begin{tabular}{lccccc}
        \toprule
           & experts & params & throughput & \text{post-norm} & \text{pre-norm}   \\ 
        \midrule
        Transformer & 1 & 36M & $1.1\times$ & diverged & 2.88 / 2.90  \\
        Transformer & 16 & 318M & $1.0\times$ & diverged & 2.52 / 2.63  \\
        Transformer & 64 & 1.2B & $0.9\times$ & diverged & 2.36 / \textbf{2.58}  \\
        \dlr & 16 & 312M & $1.1\times$ & 2.70 / \textbf{2.86} & 2.71 / 2.88  \\
        \dlr & 64 & 1.2B & $1.0\times$ & 2.54 / 2.85 & 2.44 / 2.84  \\
        \bottomrule  \vspace*{.2pt}
     \end{tabular} 
\end{table}

\begin{figure}[t]
\centering
\includegraphics[scale=.43]{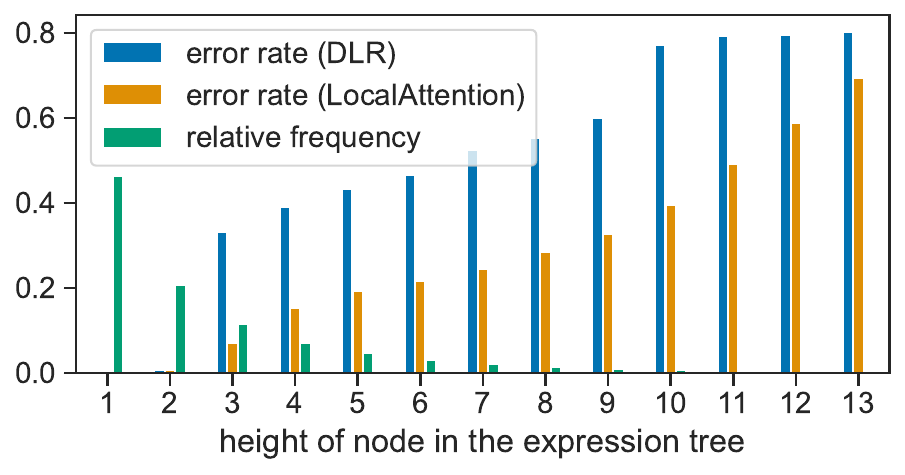}\ \  \ 
\includegraphics[scale=.43]{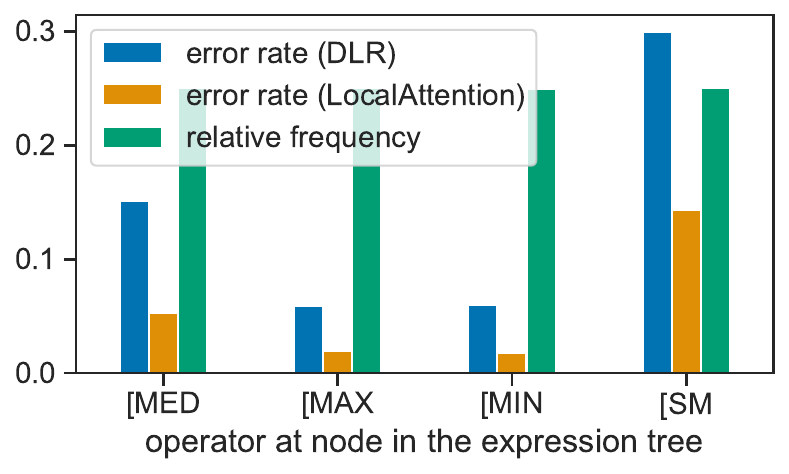}
\caption{Breakdown of errors made by \dlr and \localattention on \textsc{ListOps-SubTrees} validation set according to (left) height of the node, and (right) the operator at the node. See \S\ref{sec:lra-tasks-results} for details.}\label{fig:listops-subtree-err}
\end{figure}

\begin{figure}[t]
    \centering
    % \vspace*{-10pt}
    % \hspace*{-0.15in}
    \includegraphics[scale=0.56]{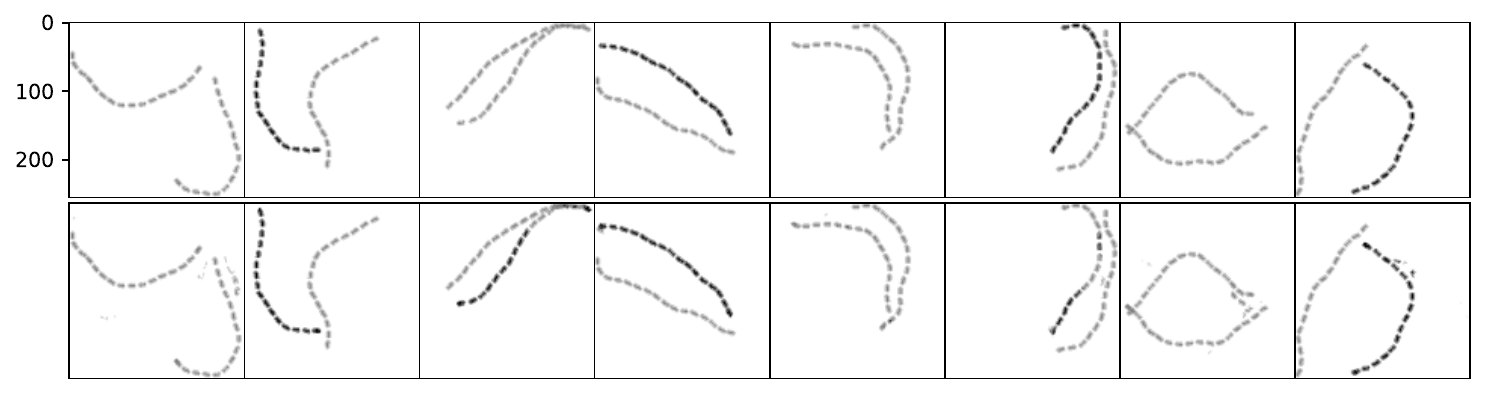}
    \includegraphics[scale=0.56]{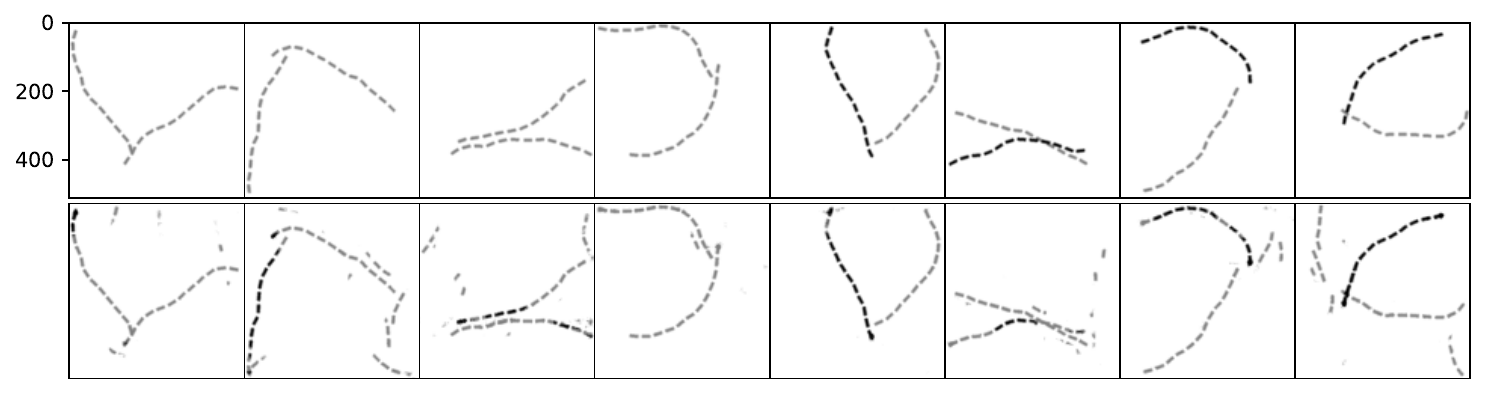}
    \caption{Predictions of \dlr on random samples from the validation set of \textsc{Pathfinder-Segmentation-256} (top) and \textsc{Pathfinder-Segmentation-512} (bottom). In each image, the first row shows the gold label mask and the second row shows the model predictions. See \S\ref{sec:lra-tasks-results} for details.}
\label{fig:dlr-pfsg-preds}
\end{figure}

\section{Conclusion}

In this work, we provide two contributions towards better understanding of state space models (SSMs) for modeling long sequences. First, we propose Diagonal Linear RNNs, which simplify diagonal state spaces by dropping the continuous state space discretization, and then propose a suitable initialization scheme. We empirically show that \dlr{} performs as well as or better than \dss{} on a wide range of synthetic tasks. Second, we provide insights onto the capabilities of SSMs by comparing them to attention-based models on both atomic tasks and high-order tasks. We show that SSMs excel at tasks that can be handled through a small number of convolution kernels, but struggle on context-dependent tasks, or where a large number of context-independent kernels are necessary. Our results offer insights that can steer future research, and our proposed synthetic benchmarks provide a rich test-bed for the research community.

\begin{ack}
We thank Amir Globerson, Ido Amos for helpful discussions and Maor Ivgi for providing useful feedback. This research was supported by the European Research Council (ERC) under the European Union Horizons 2020 research and innovation programme (grant ERC DELPHI 802800).
\end{ack}

% \newpage
{\small
\bibliographystyle{alpha}
\bibliography{all}
}

%  ---------- appendix -----------
\newpage
\appendix

\section{Supplemental Material}\label{sec:supplemental}

\subsection{Fast convolution via FFT}\label{sec:conv-fft}

For $u, K \in \C^{L}$ the Circular Convolution Theorem states that, 
\begin{align*}
\mathrm{circulant}(K)\cdot u \ =\ {\begin{bmatrix} 
K_{0} & K_{L-1} & \cdots & K_{1} \\
K_1 & K_0 & \ddots & \vdots \\
\vdots & \ddots & \ddots & K_{L-1} \\
K_{L-1} & \cdots & K_{1} & K_{0} \\
\end{bmatrix}}\cdot u \ =\ \mathrm{invFFT}_L(\mathrm{FFT}_L(K) * \mathrm{FFT}_L(u)).
\end{align*}
where $*$ denotes elementwise multiplication. As $\mathrm{FFT}, \mathrm{invFFT}$ can be done in $O(L\log L)$ time this provides a fast algorithm for circulant matrix-vector product \cite{cormen}. In practice, linear systems can often be expressed as a circulant matrix-vector product and is also true in the case of Equation \ref{eqn:kernel} which can be equivalently expressed as 
\begin{align*}
[y_{0}\  \ldots\ y_{L-1}\ |\ \ldots\ ] \ =\ 
\mathrm{circulant}([K\ | \ 0\ \ldots\ 0])_{2L \times 2L} \cdot [u_{0}\ \ldots\ u_{L-1}\ |\ 0\ \ldots\ 0]_{2L\times 1}.
\end{align*}

Similarly, Equation \ref{eqn:kernel-bi} can be expressed as a standard Toeplitz matrix-vector product 
\begin{align*}
y \ =\ {\begin{bmatrix} 
\rvec{K}_{0} & \lvec{K}_0 & \cdots & \lvec{K}_{L-2} \\
\rvec{K}_{1} & \rvec{K}_0 & \ddots & \vdots \\
\vdots & \ddots & \ddots & \lvec{K}_{0} \\
\rvec{K}_{L-1} & \cdots & \rvec{K}_{1} & \rvec{K}_{0} \\
\end{bmatrix}} \cdot u % = \mathrm{toeplitz}([\rvec{K}_{L-1},\ldots,\rvec{K}_{0},\lvec{K}_{0},\ldots,\lvec{K}_{L-2}])\cdot u
\end{align*}
which can be expressed as a circulant matrix-vector product of size $2L$ as
\begin{equation*}
% \begin{gathered}
[y_{0}\  \ldots\ y_{L-1}\ |\ \ldots\ ] \ =\ \mathrm{circulant}([\rvec{K}_{0},\ldots,\rvec{K}_{L-1},X,\lvec{K}_{L-2},\ldots,\lvec{K}_0]) \cdot [u_{0}\ \ldots\ u_{L-1}\ |\ 0\ \ldots\ 0]_{2L\times 1}
% \end{gathered}
\end{equation*}
where $X$ is allowed to be any value.

\subsection{\dlr-prod is a \dlr}\label{sec:dlr-prod}
Equations \ref{eqn:unroll-i} and \ref{eqn:kernel} imply that for a \dlr parameterized by $\Lambda = (\lambda_i)_{1\leq i\leq N}$ and $w = (w_i)_{1\leq i\leq N}$
\begin{equation*}
y_k \ = \ \sum_{j=0}^k \underbrace{\left(\sum_{i=1}^N w_i\lambda_i^j \right)}_{K_j} u_{k-j}
\end{equation*} 
where $K_k = \sum_{i=1}^N w_i\lambda_i^k  \in \C$. Then,
\begin{align*}
&\mathrm{Re}(K_{k})\cdot\mathrm{Im}(K_{k}) \in \R \ = \ \left(\sum_{m=1}^N \mathrm{Re}(w_m\lambda_m^k) \right)\left(\sum_{n=1}^N \mathrm{Im}(w_n\lambda_n^k) \right) \\
&= \sum_{m=1}^N \sum_{n=1}^N \mathrm{Re}(w_m\lambda_m^k)\mathrm{Im}(w_n\lambda_n^k) 
= \sum_{m,n} {(w_m\lambda_m^k + \bar{w}_m\bar{\lambda}_m^k) \over 2}{(w_n\lambda_n^k - \bar{w}_n\bar{\lambda}_n^k)  \over 2i} \\
&= -i/4\sum_{m,n} w_m w_n (\lambda_m \lambda_n)^k - w_m \bar{w}_n (\lambda_m \bar{\lambda}_n)^k + \bar{w}_m w_n (\bar{\lambda}_m \lambda_n)^k - \bar{w}_m \bar{w}_n (\bar{\lambda}_m \bar{\lambda}_n)^k \\
&= \sum_{j=1}^{4N^2} \tilde{w}_j \tilde{\lambda}_j^k
\end{align*}
where clearly $\tilde{w}_j$ and $\tilde{\lambda}_j$ can be appropriately defined from the expression above it. Finally, $\sum_{j=1}^{4N^2} \tilde{w}_j \tilde{\lambda}_j^k$ is the expression of a \dlr kernel of size at most $4N^2$.

\paragraph{Kronecker product of \dlr's} In general, given two \dlrs parameterized by $\Lambda, w \in \C^M$ and $\tilde{\Lambda}, \tilde{w} \in \C^N$ we define their \textit{Kronecker product} as the \dlr parameterized by $\Lambda\otimes \tilde{\Lambda}, w \otimes \tilde{w} \in \C^{MN}$. It is easy to see that the kernel $K(\Lambda\otimes \tilde{\Lambda}, w \otimes \tilde{w}) \in \C^L$ of this \dlr is the elementwise product of kernels $K(\Lambda, w)$ and $K(\tilde{\Lambda}, \tilde{w})$ as
\begin{align*}
&K(\Lambda, w)_k\cdot K(\tilde{\Lambda}, \tilde{w})_k \in \C \ = \ \left(\sum_{m=1}^M w_m\lambda_m^k \right)\left(\sum_{n=1}^N \tilde{w}_n\tilde{\lambda}_n^k \right) \\
&= \sum_{m=1}^N \sum_{n=1}^N (w_m\tilde{w}_n)(\lambda_m\tilde{\lambda}_n)^k 
= K(\Lambda\otimes \tilde{\Lambda}, w \otimes \tilde{w})_k .
\end{align*}

\subsection{\dlr vs Linear RNN}\label{app:diagonal}

As stated in \S\ref{sec:dlr-diag}, for $A \in \C^{N \times N}$, $B \in \C^{N \times 1}$, $C \in \C^{1 \times N}$, a linear RNN computes the following 1-D sequence-to-sequence map from an input $(u_0,\ldots,u_{L-1}) = u \in \R^L$ to output $(y_0,\ldots,y_{L-1}) = y \in \C^L$ via the recurrence
\begin{equation*}
x_k = A x_{k-1} + B \cdot u_k\ \ \ ,\ \ \ y_k = C \cdot x_k .
\end{equation*}

\begin{proposition*} In the above equation, let $A \in \C^{N \times N}$ be diagonalizable over $\C$ as $V \mathrm{diag}(\Lambda) V^{-1}$. Then, $\exists w \in \C^N$ such that \dlr parameterized by $\Lambda, w$ (Equation \ref{eqn:discrete}) computes the same map as the above linear RNN.
\end{proposition*}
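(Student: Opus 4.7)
The plan is to use the diagonalization of $A$ to perform a change of basis on the hidden state, then rescale each coordinate so that the input coefficient equals $1$, matching the form of Equation \ref{eqn:discrete}.

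First I would write $A = V\diag(\Lambda)V^{-1}$ and introduce the change of variables $\tilde{x}_k := V^{-1}x_k \in \C^N$. Multiplying the recurrence $x_k = Ax_{k-1} + B\cdot u_k$ on the left by $V^{-1}$ yields
\begin{equation*}
\tilde{x}_k \ =\ \diag(\Lambda)\,\tilde{x}_{k-1} \ +\ \tilde{B}\cdot u_k,\qquad y_k \ =\ \tilde{C}\,\tilde{x}_k,
\end{equation*}
where $\tilde{B} := V^{-1}B \in \C^{N\times 1}$ and $\tilde{C} := CV \in \C^{1\times N}$. This is already diagonal, but not yet in the form of Equation \ref{eqn:discrete} because the input coefficient is $\tilde{B}$ rather than $\mathbf{1}$.

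Next, for each coordinate $i$ such that $\tilde{B}_i \neq 0$, I would define $x'_{i,k} := \tilde{x}_{i,k}/\tilde{B}_i$. A direct substitution into the $i$-th scalar recurrence gives $x'_{i,k} = \lambda_i x'_{i,k-1} + u_k$, which exactly matches Equation \ref{eqn:discrete-i} with zero initial condition. The output then becomes $y_k = \sum_{i} \tilde{C}_i \tilde{x}_{i,k} = \sum_{i} (\tilde{C}_i \tilde{B}_i)\, x'_{i,k}$, so setting $w_i := \tilde{C}_i \tilde{B}_i$ realizes $y_k = \inner{w}{x'_k}$, as required.

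The one subtlety, which I expect to be the only real obstacle, is handling coordinates where $\tilde{B}_i = 0$. For such $i$, the scalar recurrence becomes $\tilde{x}_{i,k} = \lambda_i \tilde{x}_{i,k-1}$, so starting from $\tilde{x}_{i,-1}=0$ one has $\tilde{x}_{i,k}=0$ for all $k$, and this coordinate contributes nothing to $y_k$. I would simply set $x'_{i,k} := 0$ (which is the trivial solution of $x'_{i,k} = \lambda_i x'_{i,k-1} + u_k$ only if we reinterpret it as an independent DLR coordinate driven by $u_k$ but with zero weight) and pick $w_i := 0$; since $w_i x'_{i,k}$ vanishes regardless of whether we use $x'_{i,k}$ from the DLR recurrence or from the original, the sum $\sum_i w_i x'_{i,k}$ still equals $y_k$. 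Thus with $w \in \C^N$ defined coordinate-wise by $w_i = \tilde{C}_i \tilde{B}_i$ in all cases, the DLR of Equation \ref{eqn:discrete} parameterized by $(\Lambda, w)$ computes exactly the same sequence-to-sequence map as the original linear RNN, completing the proof.
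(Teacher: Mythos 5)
Your proof is correct, and it ends at exactly the same weight vector as the paper: $w_i = (CV)_i\,(V^{-1}B)_i$. The difference is in how you get there. You argue at the level of the state, via the similarity transform $\tilde{x}_k = V^{-1}x_k$ followed by a per-coordinate rescaling $x'_{i,k} = \tilde{x}_{i,k}/\tilde{B}_i$ to force the input coefficient to be $\mathbf{1}$; this is a genuine state-space equivalence, but it buys you the extra bookkeeping of the $\tilde{B}_i = 0$ coordinates, which you handle correctly (those coordinates carry weight $w_i = 0$, so the DLR coordinate driven by $u_k$ contributes nothing either way). The paper instead never touches the state: it unrolls the recurrence to $y_k = \sum_{j=0}^k C A^j B\, u_{k-j}$, substitutes $A^j = V\,\mathrm{diag}(\Lambda^j)\,V^{-1}$, and reads off $CA^jB = \sum_i c_i b_i \lambda_i^j$ with $c = CV$, $b = V^{-1}B$, i.e.\ it matches the convolution kernels of the two maps term by term (Equation \ref{eqn:unroll-i}), which makes the case analysis unnecessary since only the products $c_i b_i$ ever appear. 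Your version is slightly longer but makes the state equivalence explicit; the paper's is shorter and sidesteps the degenerate coordinates entirely. Both are valid proofs of the proposition.
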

\begin{proof} Assuming $x_{-1} = 0$, the linear RNN can be unrolled as 
\begin{align*}
y_k &= \sum_{j=0}^k C A^j B u_{k-j} = \sum_{j=0}^k C \left(V\mathrm{diag}(\Lambda)V^{-1}\right)^j B u_{k-j} \\
&= \sum_{j=0}^k CV\left(\mathrm{diag}(\Lambda)\right)^j V^{-1}B u_{k-j} = \sum_{j=0}^k (CV)\mathrm{diag}(\Lambda^j) (V^{-1}B) u_{k-j}
\end{align*}

Let $CV \in \C^{1 \times N} = (c_1,\ldots,c_N)$, $V^{-1}B \in \C^{N \times 1} = (b_1,\ldots,b_N)$, $w = (c_1 b_1,\ldots,c_N b_N)$ and $\Lambda = (\lambda_1,\ldots,\lambda_N)$. Then,
\begin{align*}
y_k &= \sum_{j=0}^k (CV)\mathrm{diag}(\Lambda^j) (V^{-1}B) u_{k-j} 
= \sum_{j=0}^k \sum_{i=1}^N c_i \lambda_i^j b_i u_{k-j} 
= \sum_{j=0}^k \sum_{i=1}^N w_i \lambda_i^j u_{k-j}
\end{align*}
which is identical to the expression of $y_k$ in Equation \ref{eqn:unroll-i} of a \dlr parameterized by $\Lambda, w$.
\end{proof}

% \comment{
\subsection{On the expressivity of \dlr-$\R$}\label{sec:real-vander}

% We saw in \S\ref{sec:lra-tasks-results} (Table \ref{tab:shift}) that \dlr-$\R$ fails on \textsc{Shift} which requires . Here, 
In \dlr-$\R$, the $\Lambda, w$ in Equation \ref{eqn:discrete} are restricted as $\Lambda \in (0,1]^N$ and $w \in \R^N$. The kernel $K \in \R^L$ in Equation \ref{eqn:kernel} can be written as a Vandermonde matrix-vector product $K = w P_{N \times L}$ where $P_{ij} = \lambda_i^j$. It is known that if $\Lambda \in \R_+^N$, $P$ is highly ill-conditioned \cite{Gautschi2020HowA,aubel2019vandermonde}. Here we explain its failure on the \textsc{Shift} task (Table \ref{tab:shift}) and in Claim \ref{claim:dlr-r} prove that the norm of the solution grows exponentially with $N$. Shifting an input $u \in \R^L$ by $S$ positions requires a one-hot kernel $\mathbf{1}_S \in \R^L$ that is $1$ at position $S$. Assuming, $S=L=N$, we have $K=\mathbf{1}_N$ and need to solve for $w \in \R^N$ such that $wP = \mathbf{1}_N$. 

\begin{claim}\label{claim:dlr-r} Let $\mathbf{1}_N \in \R^{1\times N}$ denote the one-hot vector with $1$ at position $N$. Let $P \in \R^{N \times N}$ with $P_{ij} = \lambda_{i}^j$ be a $N\times N$ Vandermonde matrix with each $\lambda_i \in [0,1]$. If $\mathbf{1}_N = wP$ for $w \in \R^N$, then $||w||_\infty \geq 2^{2N - O(\log N)}$.
\end{claim}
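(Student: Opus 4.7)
The plan is to view the condition $\mathbf{1}_N = wP$ as saying that the linear functional $L(q) := \sum_{i=1}^N w_i\, q(\lambda_i)$ extracts the top-degree coefficient of any polynomial of degree at most $N-1$, and then to test $L$ on a polynomial that has a huge leading coefficient yet stays bounded on $[0,1]$. First I would rewrite $\mathbf{1}_N = wP$ coordinate-wise (using the $0$-indexed column convention for $P$) as $\sum_i w_i \lambda_i^j = 0$ for $j=0,\ldots,N-2$ and $\sum_i w_i \lambda_i^{N-1} = 1$, so that $L$ vanishes on $1,x,\ldots,x^{N-2}$ and takes the value $1$ at $x^{N-1}$; by linearity, $L(p) = [x^{N-1}]\,p(x)$ for every polynomial $p$ of degree at most $N-1$. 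This step does not require the $\lambda_i$ to be distinct, only that the system has a solution.

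Next I would plug a shifted Chebyshev polynomial into $L$. Let $T_{N-1}$ denote the Chebyshev polynomial of the first kind and define $p(x) := T_{N-1}(2x-1)$. Since $2x-1 \in [-1,1]$ whenever $x \in [0,1]$, we have $|p(\lambda_i)| \le 1$ for every $i$. On the other hand, $T_{N-1}$ has leading coefficient $2^{N-2}$, and the substitution $x \mapsto 2x-1$ multiplies the leading coefficient in $x$ by $2^{N-1}$, so $[x^{N-1}]\,p(x) = 2^{2N-3}$.

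Combining the two ingredients, $2^{2N-3} = L(p) = \sum_{i=1}^N w_i\, p(\lambda_i)$, and the triangle inequality yields
\[
2^{2N-3} \ \le\ \sum_{i=1}^N |w_i|\cdot|p(\lambda_i)|\ \le\ N\,\|w\|_\infty ,
\]
so $\|w\|_\infty \ \ge\ 2^{2N-3}/N \ =\ \exp(\Omega(N))$, as desired.

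The obstacle is conceptual rather than computational: one has to recognise that the hypothesis is exactly dual to the question \emph{``how large can the $x^{N-1}$ coefficient of a degree-$(N{-}1)$ polynomial be if it is bounded by $1$ on $[0,1]$?''}, whose answer $\Theta(2^{2N-3})$ is attained by the shifted Chebyshev polynomial. Once this is spotted, nothing beyond reading off leading coefficients is required. A Cramer's-rule attempt writes $w_i = 1/\prod_{k\neq i}(\lambda_i - \lambda_k)$ and reduces the claim to showing $\min_i \prod_{k\neq i}|\lambda_i - \lambda_k| \le c^{-N}$ uniformly in the nodes, which is essentially the same extremal fact in disguise but clumsier to prove directly, so I would avoid that route.
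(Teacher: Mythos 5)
Your proof is correct, and it takes a genuinely different route from the paper's. The paper first argues that a solution forces the $\lambda_i$ to be distinct, then invokes the explicit formula for the inverse of a Vandermonde matrix to get $w_i = -1/\prod_{j\neq i}(\lambda_j-\lambda_i)$, and finally lower-bounds the geometric mean $(\prod_i |w_i|)^{1/N} = (\prod_{i<j}|\lambda_j-\lambda_i|)^{-2/N}$ by citing the Fekete/transfinite-diameter estimate $\prod_{i<j}|\lambda_j-\lambda_i| \leq (c+o(1))\,2^{-N^2+O(N\log N)}$ for nodes in $[0,1]$. You instead dualize: the hypothesis says the functional $L(q)=\sum_i w_i q(\lambda_i)$ reads off the $x^{N-1}$-coefficient of every polynomial of degree at most $N-1$, and testing $L$ on the shifted Chebyshev polynomial $T_{N-1}(2x-1)$ — bounded by $1$ on $[0,1]$ yet with leading coefficient $2^{2N-3}$ (valid for $N\geq 2$; the claim is vacuous asymptotically for $N=1$) — gives $\|w\|_\infty \geq 2^{2N-3}/N$ directly. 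The two arguments rest on the same extremal fact about $[0,1]$ (its capacity $1/4$, equivalently the Chebyshev constant), but yours buys several things: it is self-contained and elementary, it produces an explicit constant rather than an $\Omega(\cdot)$ hidden in a cited determinant bound, and it needs neither distinctness of the $\lambda_i$ nor uniqueness of $w$ — it bounds \emph{any} solution of $wP=\mathbf{1}_N$, so the paper's preliminary invertibility step becomes unnecessary. The paper's route, in exchange, exposes the exact closed form of the unique solution, which is informative in its own right. Your indexing convention (columns $j=0,\ldots,N-1$, one-hot at the last column) matches the one the paper itself uses inside its proof, so there is no mismatch there.
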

\begin{proof}
We first claim that one necessarily requires $N$ distinct $\lambda_i$'s. If not then let $\lambda_1,\ldots,\lambda_r$, $r < N$  be distinct and $w_{1\times r}P_{r \times N} = \mathbf{1}_N$. The first $r$ equations can be written as $w_{1\times r}P_{r \times r} = \mathbf{0}_r$ where $P_{ij} = \lambda_i^j$, $1\leq i \leq r$, $j < r$. As $\lambda_1,\ldots,\lambda_r$ are assumed to be distinct, $P_{r \times r}$ is invertible and hence $w_{1\times r} = \mathbf{0}$ which does not satisfy $w_{1\times r}P_{r \times N} = \mathbf{1}_N$. Therefore, $\lambda_1,\ldots,\lambda_N$ must be distinct which implies $P$ is invertible and $w = \mathbf{1}_N P^{-1}$. By the expression of the inverse of a Vandermonde, this gives the unique solution $w_i = {-1 \over \prod_{j\neq i} (\lambda_j - \lambda_i)}$, $1\leq i \leq N$. Clearly, to show the lower bound for $||w||_\infty$, it suffices to show it for $(\prod_i |w_i|)^{1/N}$. We have $(\prod_i |w_i|)^{1/N} = (\prod_{i<j} |\lambda_j - \lambda_i|)^{-2/N}$. If each $\lambda_i \in [0,1]$, it can be shown that the Vandermonde determinant $\prod_{i<j} |\lambda_j - \lambda_i| \leq (c+o(1))2^{-N^2} \sqrt{(N-1)!}(8e)^{N/2}N^{3/8} = 2^{-N^2 + O(N\log N)}$ for some fixed $c > 0$ \cite{Fekete} and hence $(\prod_i |w_i|)^{1/N} \geq 2^{2N - O(\log N)}$. 
\end{proof} 

\subsection{Additional Experiments}\label{sec:additional}

In Table \ref{tab:atomic-tasks}, a single \attention layer uses fewer parameters than a \dlr layer. For a comparison in a setting where both use same the number of parameters, we repeated the experiments with the \attention layer additionally followed by two feed-forward layers each with a feed-forward dimension of $2048$. The results are presented in Table \ref{tab:param-tied}.

\begin{table}[t]%\setlength{\tabcolsep}{4pt} %% default is 6pt
    \small
    \centering
    \caption{(Top) Average validation R$^2$ across batches on tasks described in \S\ref{sec:atomic-tasks}, (Bottom) Relative time per step for models using the same input length. Actual input length for \textsc{Reverse} and \textsc{Sort} is $2L$. Models with input lengths $2^{12}$, $2^9$ are trained for $40K$, $11K$ steps respectively.}\label{tab:param-tied}
    \resizebox{\textwidth}{!}{%
    \begin{tabular}{lcccc|ccc}
    \toprule
                 &  \dlr & \attention + $2\times$\textrm{FF} & \attention & \dlr &  \dlr & \attention & \attention + $2\times$\textrm{FF}\\ 
        number of layers &  $1$ & $1$ & $1$ & $6$ &  $6$ & $2$ & $2$ \\
        params &  $1.1M$ & $1.1M$ & $83K$ & $6.4M$ &  $6.4M$ & $166K$ & $2.2M$ \\
        $L$       &  $2^{12}$ & $2^{12}$ & $2^{12}$ & $2^{12}$ &  $2^{9}$ & $2^{9}$ & $2^{9}$\\ 
    \midrule
      \textsc{Shift} & \textbf{1} & .69 & .72 & \textbf{1} & 1 & 1 & 1\\
      \textsc{Select-Fixed} & \textbf{.97} & 0 & .72 & \textbf{1} & 1 & .94 & 1\\
      \textsc{Solve-Fixed} & \textbf{1} & 0 & 0 & \textbf{1} & 1 & .95 & .96\\ 
      \textsc{Reverse} & .01 & .04 & .03 & \textbf{.99} & .95 & .28 & .32\\ 
      \textsc{Solve} & 0 & 0 & 0 & 0 & .95 & 0 & 0\\ 
      \textsc{Select} & 0 & 0 & .17 & 0 & .86 & .97 & .64\\  
      \textsc{Sort} & 0 & 0 & 0 & .49 & .50 & .51 & .51\\ 
      \textsc{ContextShift} & 0 & 0 & 0 & .02 & .11 & .04 & .05\\ 
      \midrule
       \textsc{Shift}  & $1\times$ & $7\times$ & $6.3\times$ & $5\times$ &  $1\times$ & $1.1\times$ & $1.7\times$\\
      \textsc{Reverse},\textsc{Sort} & $1\times$ & $14.3\times$ & $13.3\times$ & $5.4\times$ &  $1\times$ & $1.8\times$ & $2.6\times$\\  
      \bottomrule  %\vspace*{.1pt}
     \end{tabular}
     }
\end{table}

\subsection{Experimental Setup}\label{sec:experimental-setup}

In this section, we describe the training details for the experiments presented in \S\ref{sec:experiments}. Our experimental setup was built on top of an earlier version of the training framework provided by the S4 authors\footnote{\url{https://github.com/HazyResearch/state-spaces}} and our implementations of \dlr, \dssexp and \attention leverage the PyKeOps library for memory efficiency \cite{pykeops}.

\paragraph{Details for Table \ref{tab:atomic-tasks}} Input $x \in \R^{L \times D}$ is linearly projected to $xW \in \R^{L \times d}$ where $d$ is the model dimension. For a desired output $y \in \R^{L' \times D'}$ we take the output $o\in \R^{L \times d}$ of the model and linearly project it to $oW' \in \R^{L \times D'}$  and take its rightmost $L'$ positions as the prediction. All experiments use post-norm Layer Normalization. Model dimension $H = 128$ and weight decay of optimizer was 0. For SSMs, state size $N = 4096$. Learning rate (and schedule) of SSM layer parameters was same as other model parameters. Constant learning rate schedule was used. In \dssexp, $\log\Lambda$ is initialized as $(-.5 + 2\pi i n)_{0\leq n\leq N-1}$ \cite{s4d}. Hyperparameters are provided in Table \ref{tab:atomic-hyperparams}.

\paragraph{Details for Table \ref{tab:shift}} Same as details for Table \ref{tab:atomic-tasks} except $H = 32$, batch size is 4, number of layers is 1 and number of steps was $86K$. Learning rate was 1e-5 for \dlr, 1e-3 for \dssexp and 1e-5 for \textsc{SGConv} \cite{sgconv}. $\alpha$-min and $\alpha$-max were 1e-5 for \textsc{SGConv} to avoid signal decay, kernel dimension $d=4096$ and number of concatenated kernels (i.e. number of scales) was computed so that the resulting kernel is at least as long as the input. The kernel parameters were initialized from $\mathcal{N}(0,\sigma^2=d^{-2})$.

Experiments in Table \ref{tab:atomic-tasks} and \ref{tab:shift} were performed on a single NVIDIA 3090 (24GiB).

\begin{table*}[h]
  \centering
  \small
  \caption{
    Hyperparameters for Table \ref{tab:atomic-tasks} on all tasks except \textsc{MIPS}. Exceptions are detailed in \S\ref{sec:experimental-setup}. LR is initial learning rate.
  }\label{tab:atomic-hyperparams}
%   \resizebox{\textwidth}{!}{%
    \begin{tabular}{@{}llllllllllll@{}}
      \toprule
                                      & L & layers & H & N & dt-min & dt-max & LR & Batch Size & steps & epochs  \\
      \midrule
      \dlr      &   $2^{12}$       & 1 / 6             & $2^{7}$                       & $2^{12}$            & 1e-5             & 1e-5     & 1e-4     & 16              & $40K$           & 12                 \\
     \dssexp      &   $2^{12}$       & 1              & $2^{7}$                       & $2^{12}$            & 1e-4             & 1e-2     & 1e-3     & 16              & $40K$           & 12                 \\
    \attention     &   $2^{12}$       & 1              & $2^{7}$                       &             &               &       & 1e-3     & 16              & $40K$           & 12                 \\
    \dlr      &   $2^{9}$       &  6             & $2^{7}$                       & $2^{12}$            & 1e-5             & 1e-5     & 5e-5     & 64              & $11K$           & 12                 \\
    \attention     &   $2^{9}$       & 2              & $2^{7}$                       &             &               &       & 1e-3     & 64              & $11K$           & 12                 \\
    \bottomrule
    \end{tabular}%
%   }
\end{table*}

\paragraph{Details for Table \ref{tab:lra-tasks-results}} In all models and tasks, after each model block, a GLU non-linearity \cite{glu} was additionally applied. Cosine learning rate schedule with linear warmup was used. The test metrics were measured at the checkpoint with the highest validation accuracy. Each metric was computed for an individual test batch and averaged across the batches in the test set, which depending upon the metric might vary with the batch size. SSM trainings on \textsc{ListOps-SubTrees} and \textsc{Pathfinder-Segmentation-128} were performed on single 3090, whereas for the $256$, $512$ cases we used 3 3090's and 7 V100's respectively.

\begin{table*}[h]
  \centering
  \caption{
    Hyperparameters for \dlr models in Table \ref{tab:lra-tasks-results}. LS denotes \textsc{ListOps-SubTrees} and PS denotes \textsc{Pathfinder-Segmentation}. Exceptions are detailed in \S\ref{sec:experimental-setup}. WD is weight decay, B is batch size. For efficiency, instead of constructing kernels of length equal to the input length, they are restricted to ``kernel size''.
  }\label{tab:lra-hyperparams}
  %\resizebox{\textwidth}{!}{%
    \begin{tabular}{@{}lllllllllllll@{}}
      \toprule
                                      & L & layers & H & N & dt-min & dt-max & LR & B & steps & epochs & WD & kernel size  \\
      \midrule
       LS      &   $2^{13}$       & 6             & $2^{7}$                       & $2^{10}$            & 1e-4             & 1e-1     & 8e-4   & 32              & $300K$  & 100   & 0.01 &   $2^{13}$    \\
    PS  &   $2^{14}$       & 5             & $2^{7}$                       & $2^{10}$            & 1e-4             & 1e-1     & 1e-4   & 16              & $150K$  & 30   & 0   &   $2^{13}$  \\
    PS  &   $2^{16}$       & 6             & $2^{7}$                       & $2^{10}$            & 1e-4             & 1e-1     & 5e-5   & 18              & $178K$  & 40   & 0  &   $2^{15}$   \\
    PS  &   $2^{18}$       & 12             & $2^{6}$                       & $2^{11}$            & 1e-4             & 1e-1     & 1e-5   & 14              & $114K$  & 21   & 0  &   $2^{15}$    \\
    \bottomrule
    \end{tabular}%
  %}
\end{table*}

\paragraph{Details for Table \ref{tab:pathx}} Same as the details for Table 3 as listed above with the following changes. 

\begin{table*}[h]
  \centering
    \caption{
    Hyperparameters for bidirectional \dlr model in Table \ref{tab:pathx}. Layer norm is used and max pooling is applied at the model output to form a single vector representation. Dropout is traditional per-element dropout. For  \textsc{Pathfinder}, postnorm is used.
  }\label{tab:pathx-hyperparams}
  \resizebox{\textwidth}{!}{%
    \begin{tabular}{@{}llllllllllllll@{}}
      \toprule
                                      & L & layers & H & N & dt-min & dt-max & LR & B & steps & epochs & WD & kernel size  & dropout \\
      \midrule
       \textsc{Path-X}  &   $2^{14}$       & 6             & $2^{8}$                       & $2^{11}$            & 1e-4             & 1e-1     & 1e-4   & 16              & $500K$  & 50   & 0.05   &   $2^{14}$ & 0 \\
       \textsc{Text}  &   $2^{11}$       & 4             & $2^{7}$                       & $2^{8}$            & 0.2             & 0.5     & 1e-4   & 128              &   & 200   & 0.0   &   $2^{11}$  & 0.3\\
       \textsc{Retrieval}  &   $4K$       & 6             & $2^{8}$                       & $2^{8}$            & 0.05             & 0.5     & 1e-4   & 32              &  $150K$ & 30   & 0.05   &   $4K$  & 0.05\\
       \textsc{ListOps}  &   $2K$       & 6             & $2^{7}$                       & $2^{10}$            & 1e-3             & 0.5     & 4e-4   & 50              &  $320K$ & 160   & 0.05   &   $2K$  & 0.2\\
       \textsc{Pathfinder}  &   $2^{10}$       & 6             & $2^{8}$                       & $2^{10}$            & 1e-4             & 0.1     & 4e-4   & 64              &  $500K$ & 200   & 0.03   &   $2^{10}$  & 0.05\\
       \textsc{SpeechCommands}  &   $16K$       & 6             & $2^{7}$                       & $2^{11}$            & 1e-3             & 0.5     & 1e-4   & 20              &   & 200   & 0.0   &   $2K$  & 0.05\\
    \bottomrule
    \end{tabular}%
  }
\end{table*}

\paragraph{Details for Table \ref{tab:lm}} We tokenized the PG19 texts using T5 tokenizer and concatenated them to get a long sequence of tokens. This was chunked into sequences of size 4096. Each model block consists of a linear \dlr layer (without non-linearity or output projection) followed by a heterogeneous Switch layer with $N$ (say 16) feed-forward experts \cite{fedus2021switch,heterogenous-moe}. Each expert independently processes 1/$N$ fraction of its top tokens. Their outputs are scaled by the router probability and summed, allowing a token to be processed by none/multiple experts. During evaluation, we doubled the expert capacity to 2/$N$. In pre-norm, layer norm is applied to inputs of each sub-layer whereas in post-norm it is applied to their outputs. In pre-norm, an additional layer-norm is applied to the model output. Embedding size is 128 and input-output embeddings are tied. Learning rate for \dlr parameters was 2e-4 whereas for other parameters it was 1e-3. For Transformer, we replaced \dlr part with attention with 8 heads of size 64 each. Each run was performed on 3 A100's 40GiB for 1 day. Throughput in Table \ref{tab:lm} was measured on single A100 80GiB.

\begin{table*}[h!]
  \centering
    \caption{
    Hyperparameters for unidirectional \dlr model in Table \ref{tab:lm}.  
  }\label{tab:pathx-hyperparams}
  \resizebox{\textwidth}{!}{%
    \begin{tabular}{@{}llllllllllllll@{}}
      \toprule
                                      & L & layers & H & N & dt-min & dt-max & LR & B & steps & epochs & WD & kernel size  & dropout \\
      \midrule
       PG-19  &   $2^{12}$       & 16             & $384$                       & $2^{9}$            & 1e-3             & 1e-1     & 1e-3   & 120              & $60K$  & 10   & 0.1   &   $2^{12}$ & 0 \\
    \bottomrule
    \end{tabular}%
  }
\end{table*}

\begin{figure*}[h]
\centering
\begin{minipage}{\textwidth}
\begin{minted}
[
% frame=lines,
framesep=2mm,
baselinestretch=1.1,
fontsize=\small,%\footnotesize,
% linenos
]
{python}

def dlr_kernel(L, prod=False):
    # L: kernel length
    # Lambda_log_re: [N],  Lambda_log_im: [N],  W: [H N 2]  (floats)
    Lambda_log_re, Lambda_log_im, W = get_layer_parameters()
    
    # convert reals to complex
    Lambda_log = -Lambda_log_re**2 + 1j*Lambda_log_im        # [N] 
    W = W[...,0] + 1j*W[...,1]                               # [H N]
    
    pos = torch.arange(L, device=W.device)                   # [L]
    P = (Lambda_log.unsqueeze(-1) * pos).exp()               # [N L]
    K = W.matmul(P)                                          # [H L]
    return K.real * K.imag if prod else K.real               # [H L]

def state_space(u, bidirectional=False):
    # u: batch of input sequences
    # B: batch size, H: hidden size, L: sequence length
    B, H, L = u.shape
    
    if not bidirectional:
        # compute state space kernel for each of H coordinates
        K = dlr_kernel(L)                                    # [H L]
    else:
        # compute two state space kernels for each coordinate
        # one for each direction
        K = dlr_kernel(L)                                    # [2H L]
        K = torch.cat((K[:H], K[H:].flip(dim=-1)), dim=-1)   # [H 2L]
        
    # circulant matrix-vector product of size 2L
    K_f = torch.fft.rfft(K, n=2*L)                           # [H L+1]
    u_f = torch.fft.rfft(u, n=2*L)                           # [B H L+1]
    y_f = K_f * u_f                                          # [B H L+1]
    y = torch.fft.irfft(y_f, n=2*L)[...,:L]                  # [B H L]
    
    # residual connection, non-linearity, output projection not shown
    return y
\end{minted}
\end{minipage}
\caption{Core implementation of \dlr contextualization (\S\ref{sec:dlr}) in PyTorch.}\label{app:torch}
\end{figure*}

% \subsection{Figures}\label{sec:figs}

\comment{
\newpage
\newcommand{\A}[0]{{\color{OliveGreen} \textbf{A}}}
\newcommand{\B}[0]{{\color{OliveGreen} \textbf{B}}}
\renewcommand{\C}[0]{{\color{OliveGreen} \textbf{C}}}
\newcommand{\K}[0]{{\color{OliveGreen} \textbf{K}}}
\renewcommand{\u}[0]{{\color{Red} \textbf{u}}}
\newcommand{\Lambd}[0]{{\color{OliveGreen} \mathbf{\Lambda}}}
\newcommand{\lambd}[0]{{\color{OliveGreen} \mathbf{\lambda}}}
\newcommand{\w}[0]{{\color{OliveGreen} \mathbf{w}}}
\vspace{1in}
% \begin{align*}
% {dx \over dt} &= \A x + \B \u \\
% e^{{-\A t}}\cdot \frac{dx}{dt}(t) - e^{{-\A t}} \A x &= e^{{-\A t}} \B \u \\
% {d \over dt} (e^{{-\A t}}x)  &= e^{{-\A t}} \B \u \\
% e^{{-\A t}}x(t) - e^{{-\A 0}}x(0) &= \int_{0}^{t}e^{{-\A \tau}}\B \u (\tau )d\tau \\
% x(t) &= e^{\A t}x(0) + \int_{0}^{t}e^{\A (t-\tau)}\B \u (\tau )d\tau 
% \end{align*}

% $x_k := x(k\delta)$. Assume $\forall \tau \in [k\delta,k\delta+\delta)\ \u (\tau) = \u_k$.
% \begin{align*}
% x_k := x(k\delta) &= e^{\A k\delta}x(0) + \int_{0}^{k\delta}e^{\A (k\delta-\tau)}\B \u (\tau )d\tau \\
% x_{k+1} &= e^{\A (k+1)\delta}x(0) + \int_{0}^{(k+1)\delta}e^{\A ((k+1)\delta-\tau)}\B \u (\tau )d\tau \\
%  &= e^{\A k\delta}\left[e^{\A k\delta}x(0) + \int_{0}^{k\delta}e^{\A (k\delta-\tau)}\B \u (\tau )d\tau \right] + 
%  \int_{{k\delta}}^{{(k+1)\delta}} e^{\A ((k+1)\delta-\tau)}\B \u (\tau )d\tau \\
%  &= e^{\A k\delta} x_k + 
% \left(\int_{{k\delta}}^{{(k+1)\delta}} e^{\A ((k+1)\delta-\tau)} d\tau \right) \B \u_k  \\
%  &= e^{\A k\delta} x_k + 
% e^{\A ((k+1)\delta)}\left(\int_{{k\delta}}^{{(k+1)\delta}} e^{-\A\tau} d\tau \right) \B \u_k  \\
%  &= \underbrace{e^{\A k\delta}}_{\barA} x_k + \underbrace{\A^{-1} (e^{\A \delta} - I)\B}_{\barB} \u_k \\
% \end{align*}

% $\A = V\Lambd V^{-1}$\ \ ($\Lambd$ diagonal over $\mathbb{C}$)
% \begin{align*}
% & \C e^{\A\cdot k\Delt} (e^{\A\Delt} - I)\A^{-1}\B \\
% &= \C e^{V\Lambd V^{-1}\cdot k\Delt} (e^{V\Lambd V^{-1}\Delt} - I)(V\Lambd V^{-1})^{-1}\B \\
% &= \ \ (\C V) e^{\Lambd k\Delt}(e^{\Lambd\Delt} - I)\Lambd^{-1} (V^{-1}\B)\ .
% \end{align*}
% % }

\begin{equation}\label{eqn:discrete}
x_k = \diag(\Lambd) x_{k-1} + \mathbf{1} \cdot \u_k\ \ \ ,\ \ \ y_k = \inner{\w}{x_k}
\end{equation}
 As $\diag(\Lambda)$ is diagonal, the $N$ dimensions of the state $x_k$ do not interact and hence can be computed independently. Assuming $\Lambd = (\lambd_1,\ldots,\lambd_N)$, we obtain the simple recurrence 
\begin{equation}\label{eqn:discrete-i}
x_{i,k} = \lambd_i x_{i,k-1} + \u_k\ .
\end{equation}

Assuming $x_{-1} = 0$ for simplicity, Equation \ref{eqn:discrete-i} can be explicitly unrolled as
\begin{equation}\label{eqn:unroll-i}
x_{i,k} = \sum_{j=0}^k \lambd_i^j \u_{k-j} \qquad, \qquad y_k = \sum_{j=0}^k \inner{\w}{\Lambd^j} \u_{k-j} = \sum_{j=0}^k \sum_{i=1}^N \w_i\lambd_i^j \u_{k-j} 
\end{equation} 
where $\Lambd^j$ is element-wise powered $\Lambd$. For convenience, define the convolutional kernel $\K \in \C^L$ as 
\begin{equation}\label{eqn:kernel}
\K \ = \ ( \inner{\w}{\Lambd^k} )_{0\leq k < L} \quad,\quad y_k \ = \ \sum_{j=0}^k \K_j\cdot \u_{k-j}\ .
\end{equation}

\begin{equation*}
x_k = \A x_{k-1} + \B \cdot \u_k\ \ \ ,\ \ \ y_k = \C \cdot x_k .
\end{equation*}

\begin{equation*}
\K_{L-1} = \sum_{i=1}^N \w_i\lambd_i^{L-1}
\end{equation*}

}

\end{document}